\newcommand{\numCells}{\text{numCells}}
\newcommand{\BoundedSMT}{\ensuremath{\mathsf{BoundedSMT}}}
\newcommand{\ApproxMC}{\ensuremath{\mathsf{ApproxMC}}}
\newcommand{\killthis}[1]{}
\newtheorem{theorem}{Theorem}
\newtheorem{lemma}{Lemma}
\newcommand{\prob}{\ensuremath{\mathsf{Pr}}}
\newcommand{\expect}{\ensuremath{\mathsf{E}}}
\newcommand{\var}{\ensuremath{\mathsf{V}}}
\newcommand{\mc}[1]{\ensuremath{\mathcal{#1}}}
\newcommand{\NP}{\ensuremath{\mathsf{NP}}}
\newcommand{\SAT}{\ensuremath{\mathsf{SAT}}}
\newcommand{\SMT}{\ensuremath{\mathsf{SMT}}}
\newcommand{\sharpSATTool}{\ensuremath{\mathsf{sharpSAT}}}
\newcommand{\SMTApproxMC}{\ensuremath{\mathsf{SMTApproxMC}}}
\newcommand{\SMTApproxMCCore}{\ensuremath{\mathsf{SMTApproxMCCore}}}
\newcommand{\Boolector}{\ensuremath{\mathsf{Boolector}}}
\newcommand{\CDM}{\ensuremath{\mathsf{CDM}}}
\newcommand{\extract}[3]{\ensuremath{\mathsf{extract}({#1}, {#2},{#3})}}
\newcommand{\Slice}[2]{\ensuremath{\mathbf{#1}^{(#2)}}}
\newcommand{\X}{\ensuremath{\mathbf{X}}}
\newcommand{\ceil}[1]{\ensuremath{\lceil #1 \rceil}}
\newcommand{\floor}[1]{\ensuremath{\lfloor #1 \rfloor}}
\newcommand{\satisfying}[1]{\ensuremath{R_{#1}}} %
\newcommand{\Z}{\ensuremath{\mathbb{Z}}}
\newcommand{\HH}{\ensuremath{\mathcal{H}}}
\title{Approximate Probabilistic Inference via Word-Level Counting
	\thanks{The author list has been sorted alphabetically by last name; this should not be used to determine the extent of authors' contributions.}
	}
\author{Supratik Chakraborty\\Indian Institute of Technology, \\Bombay
	\And Kuldeep S. Meel\\Department of Computer Science,\\Rice University \And Rakesh Mistry\\Indian Institute of Technology, \\Bombay \AND Moshe Y. Vardi\\Department of Computer Science,\\Rice University
}
\begin{document}
\maketitle

\begin{abstract}
Hashing-based model counting has emerged as a promising approach for
large-scale probabilistic inference on graphical models. A key
component of these techniques is the use of xor-based 2-universal hash
functions that operate over Boolean domains.  Many counting problems
arising in probabilistic inference are, however, naturally encoded
over finite discrete domains.  Techniques based on bit-level (or
Boolean) hash functions require these problems to be
propositionalized, making it impossible to leverage the remarkable
progress made in {\SMT} (Satisfiability Modulo Theory) solvers that can
reason directly over words (or bit-vectors).
In this work, we present the first approximate model counter that uses
word-level hashing functions, and can directly leverage the power of
sophisticated {\SMT} solvers. Empirical evaluation over an extensive
suite of benchmarks demonstrates the promise of the approach.
\end{abstract}

\section{Introduction}
Probabilistic inference on large and uncertain data sets is
increasingly being used in a wide range of applications.  It is
well-known that probabilistic inference is polynomially
inter-reducible to model counting~\cite{Roth1996}. 
In a recent line of work, it has been
shown~\cite{CMV13b,CFMSV14,EGSS13c,IMMV15} that one can strike a fine balance 
between performance and approximation guarantees for
propositional model counting, using $2$-universal hash
functions~\cite{CW77} on Boolean domains.  This has propelled the 
model-counting formulation to emerge as a promising ``assembly 
language''~\cite{BPdB15} for inferencing in probabilistic graphical models.

In a large class of probabilistic inference problems, an important
case being lifted inference on first order representations~\cite{KLiftInf12}, 
the values of variables come from finite but large (exponential in the size 
of the representation) domains.  Data values coming from such domains are naturally 
encoded as fixed-width words, where the width is logarithmic in the size of
the domain.  Conditions on observed values are, in turn, encoded as
word-level constraints, and the corresponding model-counting problem
asks one to count the number of solutions of a word-level constraint.
It is therefore natural to ask if the success of approximate
propositional model counters can be replicated at the word-level.

The balance between efficiency and strong guarantees of hashing-based
algorithms for approximate propositional model counting
crucially depends on two factors: (i)~use of XOR-based $2$-universal
bit-level hash functions, and (ii)~use of state-of-the-art
propositional satisfiability solvers, viz. CryptoMiniSAT~\cite{SNC09}, 
that can efficiently reason about formulas that combine disjunctive clauses
with XOR clauses.  

In recent years, the performance of {\SMT} (Satisfiability Modulo
Theories) solvers has witnessed spectacular
improvements~\cite{BDdMOS12}.  Indeed, several highly optimized \SMT
solvers for fixed-width words are now available in the public
domain~\cite{BB09,JLS09,HBJBT14,DB08}.
Nevertheless, $2$-universal hash functions for fixed-width words
that are also amenable to efficient reasoning by {\SMT} solvers have
hitherto not been studied.  The reasoning power of \SMT solvers for
fixed-width words has therefore remained untapped for word-level model
counting.  Thus, it is not surprising that all existing work on
probabilistic inference using model
counting
(viz.~\cite{ChDiMa14,BPdB15,EGSS13c}) effectively reduce the problem
to propositional model counting.  Such approaches are similar 
to ``bit blasting'' in SMT
solvers~\cite{KSBook08}.

The primary contribution of this paper is an efficient word-level
approximate model counting algorithm {\SMTApproxMC} that can be employed to
answer inference queries over high-dimensional discrete
domains.  Our algorithm uses a new class of word-level hash
functions that are $2$-universal and can be solved by word-level \SMT
 solvers capable of reasoning about linear equalities on
words. Therefore, unlike previous works, {\SMTApproxMC} is able to
leverage the power of sophisticated {\SMT} solvers.

To illustrate the practical utility of {\SMTApproxMC}, we implemented
a prototype and evaluated it on a suite of benchmarks.  Our
experiments demonstrate that {\SMTApproxMC} can significantly
outperform the prevalent approach of bit-blasting a word-level
constraint and using an approximate propositional model counter that
employs XOR-based hash functions.  
Our proposed word-level hash functions embed the domain of all variables
in a large enough finite domain. Thus, one would not expect our approach
to work well for constraints that exhibit a hugely heterogeneous mix of word widths,
or for problems that are difficult for word-level {\SMT} solvers.
Indeed, our experiments suggest that the use of word-level
hash functions 
provides significant benefits when the original word-level constraint is such
that (i)~the words appearing in it have long and similar widths, and
(ii)~the \SMT solver can reason about the constraint at the
word-level, without extensive bit-blasting.  
\section{Preliminaries \label{sec:prelim}}
A \emph{word} (or \emph{bit-vector}) is an array of bits. The size of
the array is called the \emph{width} of the word. We consider here
\emph{fixed-width} words, whose width is a constant.  It is easy to see 
that a word of width $k$ can be used to represent elements of a set of 
size $2^k$.  The first-order theory of fixed-width words has been extensively
studied (see~\cite{KSBook08,BruttomessoThesis08} for an overview).
The vocabulary of this theory includes interpreted predicates and
functions, whose semantics are defined over words interpreted as
signed integers, unsigned integers, or vectors of propositional
constants (depending on the function or predicate).  When a word of
width $k$ is treated as a vector, we assume that the component bits
are indexed from $0$ through $k-1$, where index $0$ corresponds to the
rightmost bit.  %
A \emph{term}
is either a word-level variable or constant, or is obtained by
applying functions in the vocabulary to a term.  Every term has an
associated width that is uniquely defined by the widths of word-level
variables and constants in the term, and by the semantics of functions
used to build the term.  For purposes of this paper, given terms $t_1$
and $t_2$, we use $t_1 + t_2$ (resp. $t_1 * t_2)$ to
denote the sum (resp. product) of $t_1$ and $t_2$, interpreted as
unsigned integers.  Given a positive integer $p$, we use $t_1 \mod p$
to denote the remainder after dividing $t_1$ by $p$.
Furthermore, if $t_1$ has width $k$, and $a$ and $b$ are integers
such that $0 \le a \le b < k$, we use $\extract{t_1}{a}{b}$ to denote
the slice of $t_1$ (interpreted as a vector) between indices $a$ and
$b$, inclusively.

Let $F$ be a formula in the theory of fixed-width words.  The
\emph{support} of $F$, denoted $\mathsf{sup}(F)$, is the set of
word-level variables that appear in $F$.  A \emph{model} or
\emph{solution} of $F$ is an assignment of word-level constants to
variables in $\mathsf{sup}(F)$ such that $F$ evaluates to
$\mathsf{True}$.  We use $\satisfying{F}$ to denote the set of
\emph{models} of $F$.
The model-counting problem requires us to compute $|\satisfying{F}|$.  
For simplicity of exposition, we assume henceforth
that all words in $\mathsf{sup}(F)$ have the same width.  Note that
this is without loss of generality, since if $k$ is the maximum
width of all words in $\mathsf{sup}(F)$, we can construct a formula
$\widehat{F}$ such that the following hold: (i)~$|\mathsf{sup}(F)| =
|\mathsf{sup}(\widehat{F})|$, (ii)~all word-level variables in
$\widehat{F}$ have width $k$, and (iii)~$|R_F| = |R_{\widehat{F}}|$.
The formula $\widehat{F}$ is obtained by replacing every occurrence of
word-level variable $x$ having width $m ~(< k)$ in $F$ with
$\extract{\widehat{x}}{0}{m-1}$, where $\widehat{x}$ is a new variable
of width $k$.

We write $\prob\left[X: {\cal P} \right]$ for the probability of
outcome $X$ when sampling from a probability space ${\cal P}$.  For
brevity, we omit ${\cal P}$ when it is clear from the context.  

Given a word-level formula $F$, an \emph{exact model counter} returns
$|\satisfying{F}|$.  An \emph{approximate model counter} relaxes this
requirement to some extent: given a \emph{tolerance} $\varepsilon > 0$
and \emph{confidence} $1-\delta \in (0, 1]$, the value $v$ returned by
the counter satisfies
$\prob[\frac{|{\satisfying{F}}|}{1+\varepsilon} \le v \le
    (1+\varepsilon)|{\satisfying{F}}|] \ge 1-\delta$.
Our model-counting algorithm belongs to the class of approximate model
counters.

Special classes of hash functions, called \emph{$2$-wise independent
  universal} hash functions play a crucial role in our work.  Let
$\mathsf{sup}(F) = \{x_0, \ldots x_{n-1}\}$, where each $x_i$ is a word of
width $k$.  The space of all assignments of words in $\mathsf{sup}(F)$
is $\{0, 1\}^{n.k}$.  We use hash functions that map elements of $\{0,
1\}^{n.k}$ to $p$ bins labeled $0, 1, \ldots p-1$, where $1 \le p <
2^{n.k}$.  Let $\Z_p$ denote $\{0, 1, \ldots p-1\}$ and let ${\HH}$
denote a family of hash functions mapping $\{0, 1\}^{n.k}$ to $\Z_p$.
We use $h \xleftarrow{R} {\HH}$ to denote the probability space obtained
by choosing a hash function $h$ uniformly at random from ${\HH}$.  We say
that ${\HH}$ is a $2$-wise independent universal hash family if for all
$\alpha_1, \alpha_2 \in \Z_p$ and for all distinct $\X_1, \X_2 \in
\{0,1\}^{n.k}$, $\prob\left[h(\X_1) = \alpha_1 \wedge h(\X_2) =
  \alpha_2: h \xleftarrow{R} {\HH} \right] = 1/p^2$.

\section{Related Work}\label{sec:relwork}
The connection between probabilistic inference and model counting has
been extensively studied by several
authors~\cite{Cooper90,Roth1996,chavira2008probabilistic}, and it is
known that the two problems are inter-reducible.  Propositional model
counting was shown to be \#P-complete by
Valiant~\cite{Valiant1979complexity}.  It follows easily that the
model counting problem for fixed-width words is also \#P-complete.  It
is therefore unlikely that efficient exact algorithms exist for this
problem. 
\cite{Bellare00} showed that a closely related
problem, that of almost uniform sampling from propositional
constraints, can be solved in probabilistic polynomial time using an
{\NP} oracle.  
Subsequently, ~\cite{Jerr} showed that approximate model
counting is polynomially inter-reducible to almost uniform sampling.
While this shows that approximate model counting is solvable in
probabilstic polynomial time relative to an {\NP} oracle, the
algorithms resulting from this largely theoretical body of work are highly
inefficient in practice~\cite{Meel14}.

Building on the work of Bellare, Goldreich and Petrank~\shortcite{Bellare00}, Chakraborty, Meel and Vardi~\shortcite{CMV13b} proposed the
first scalable approximate model counting algorithm for propositional
formulas, called {\ApproxMC}.  Their technique is based on the use of
a family of $2$-universal bit-level hash functions that compute XOR of
randomly chosen propositional variables.  Similar bit-level hashing
techniques were also used in~\cite{EGSS13c,CFMSV14} for weighted model
counting. All of these works leverage the significant advances made
in propositional satisfiability solving in the recent past~\cite{BHMW09}.%

Over the last two decades, there has been tremendous progress in the
development of decision procedures, called Satisfiability Modulo
Theories (or {\SMT}) solvers, for combinations of first-order
theories, including the theory of fixed-width
words~\cite{BarrettFT10,BMS05}.  An {\SMT} solver uses a core propositional
reasoning engine and decision procedures for individual theories, to
determine the satisfiability of a formula in the combination of
theories.  It is now folklore that a well-engineered word-level {\SMT}
solver can significantly outperform the naive approach of
\emph{blasting} words into component bits and then using a
propositional satisfiability solver~\cite{DB08,JLS09,BCFGHNPS07}.  The
power of word-level {\SMT} solvers stems from their ability to reason
about words directly (e.g. $a + (b - c) = (a - c) + b$ for every word
$a$, $b$, $c$), instead of \emph{blasting} words into component bits
and using propositional reasoning.

The work of~\cite{ChDiMa14} tried to extend
{\ApproxMC}~\cite{CMV13b} to
non-propositional domains.  A crucial step in their approach is to
propositionalize the solution space (e.g. bounded integers are equated
to tuples of propositions) and then use XOR-based bit-level hash
functions.  Unfortunately, such propositionalization can significantly
reduce the effectiveness of theory-specific reasoning in an {\SMT}
solver.  The work of~\cite{BPdB15} used bit-level hash functions with
the propositional abstraction of an {\SMT} formula to solve the
problem of \emph{weighted model integration}. This approach also fails
to harness the power of theory-specific reasoning in {\SMT} solvers.

Recently,~\cite{BRGD15} proposed $\mathsf{SGDPLL}(T)$, an algorithm
that generalizes {\SMT} solving to do lifted inferencing and model
counting (among other things) modulo background theories (denoted
$T$).  %
A fixed-width word model counter, like the one proposed in this paper,
can serve as a theory-specific solver in the $\mathsf{SGDPLL}(T)$
framework.  In addition, it can also serve as an alernative to
$\mathsf{SGDPLL}(T)$ when the overall problem is simply to count models in
the theory $T$ of fixed-width words,
There have also been other attempts to exploit the power of {\SMT}
solvers in machine learning.  For example,~\cite{TSP14} used
optimizing {\SMT} solvers for structured relational learning using
Support Vector Machines.  This is unrelated to our approach
of harnessing the power of {\SMT} solvers for probabilistic inference
via model counting.

\section{Word-level Hash Function} \label{sec:hash}
The performance of hashing-based techniques for approximate model
counting depends crucially on the underlying family of hash functions
used to partition the solution space.  A popular family of hash
functions used in propositional model counting is ${\HH}_{xor}$,
defined as the family of functions obtained by XOR-ing a random subset
of propositional variables, and equating the result to either $0$ or
$1$, chosen randomly.  The family ${\HH}_{xor}$ enjoys important
properties like $2$-independence and easy implementability, which make
it ideal for use in practical
model counters for propositional
formulas~\cite{Gomes-Sampling,EGSS13c,CMV13b}.  Unfortunately, word-level
universal hash families that are $2$-independent, easily implementable
and amenable to word-level reasoning by {\SMT} solvers, have not been
studied thus far.  %
In this section, we present ${\HH}_{SMT}$, a family of word-level hash
functions that fills this gap.%
As discussed earlier, let $\mathsf{sup}(F) = \{x_0, \ldots x_{n-1}\}$,
where each $x_i$ is a word of width $k$.  We use $\X$ to denote the
$n$-dimensional vector $(x_0, \ldots x_{n-1})$.  The space of all
assignments to words in $\X$ is $\{0,1\}^{n.k}$.  Let $p$ be a prime
number such that $2^k \leq p < 2^{n.k}$.
Consider a family ${\HH}$ of hash functions mapping $\{0,1\}^{n.k}$ to
$\Z_p$, where each hash function is of the form $h(\X) =
(\sum_{j=0}^{n-1}a_{j}*x_j + b)\mod p$, and the $a_{j}$'s and $b$ are
elements of $\Z_p$, represented as words of width $\ceil{\log_2 p}$.
Observe that every $h \in {\HH}$ partitions $\{0,1\}^{n.k}$ into $p$
bins (or cells).  Moreover, for every $\xi \in \{0,1\}^{n.k}$ and
$\alpha \in \Z_p$, $\prob\left[h(\xi) = \alpha : h \xleftarrow{R}
  {\HH}\right] = p^{-1}$.  For a hash function chosen uniformly at
random from ${\HH}$, the expected number of elements per cell is
$2^{n.k}/p$.  Since $p < 2^{n.k}$, every cell has at least $1$ element
in expectation.  Since $2^k \leq p$, for every word
$x_i$ of width $k$, we also have $x_i \mod p = x_i$. Thus, distinct
words are not aliased (or made to behave similarly) because of modular
arithmetic in the hash function.

Suppose now we wish to partition $\{0.1\}^{n.k}$ into $p^c$ cells,
where $c > 1$ and $p^c < 2^{n.k}$. To achieve this, we need to define
hash functions that map elements in $\{0,1\}^{n.k}$ to a tuple in
$\left(\Z_p\right)^c$.  A simple way to achieve this is to take
a $c$-tuple of hash functions, each of which maps $\{0,1\}^{n.k}$ 
to $\Z_p$.
Therefore, the desired family of hash functions is simply the iterated
Cartesian product ${\HH} \times \cdots \times {\HH}$, where the
product is taken $c$ times.  Note that every hash function in this
family is a $c$-tuple of hash functions.  For a hash function chosen
uniformly at random from this family, the expected number of elements
per cell is $2^{n.k}/p^c$.

An important consideration in hashing-based techniques for approximate
model counting is the choice of a hash function that yields cells that
are neither too large nor too small in their expected sizes. Since
increasing $c$ by $1$ reduces the expected size of each cell by a
factor of $p$, it may be difficult to satisfy the above requirement if
the value of $p$ is large.  At the same time, it is desirable to have
$p > 2^k$ to prevent aliasing of two distinct words of width $k$.
This motivates us to consider more general classes of word-level hash
functions, in which each word $x_i$ can be split into thinner slices,
effectively reducing the width $k$ of words, and allowing us to use
smaller values of $p$.  We describe this in more detail below.

Assume for the sake of simplicity that $k$ is a power of $2$, and let
$q$ be $\log_2 k$.  For every $j \in \{0, \ldots q-1\}$ and for every
$x_i \in \X$, define $\Slice{\mathbf{x_i}}{j}$ to be the
$2^j$-dimensional vector of slices of the word $x_i$, where each slice
is of width $k/2^j$.  For example, the two slices in
$\Slice{\mathbf{x_1}}{1}$ are $\extract{x_1}{0}{k/2-1}$ and
$\extract{x_1}{k/2}{k-1}$.  Let $\X^{(j)}$ denote the
$n.2^j$-dimensional vector $(\mathbf{x_0}^{(j)}, \mathbf{x_1}^{(j)},
\ldots \mathbf{x_{n-1}}^{(j)})$.  It is easy to see that the $m^{th}$
component of $\X^{(j)}$, denoted $\X_{m}^{(j)}$, is
$\extract{x_i}{s}{t}$, where $i = \floor{m/2^{j}}$, $s = (m \mod
2^j)\cdot(k/2^j)$ and $t = s + (k/2^j) -1$.  Let $p_j$ denote the
smallest prime larger than or equal to $2^{(k/2^j)}$.  Note that this
implies $p_{j+1} \leq p_j$ for all $j \ge 0$.  In order to obtain a
family of hash functions that maps $\{0,1\}^{n.k}$ to $\Z_{p_j}$, we
split each word $x_i$ into slices of width $k/2^j$, treat these slices
as words of reduced width, and use a technique similar to the one used
above to map $\{0,1\}^{n.k}$ to $\Z_{p}$.  Specifically, the family
${\HH}^{(j)} = \left\{h^{(j)} : h^{(j)}(\X) =
\left(\sum_{m=0}^{n.2^j-1} a_{m}^{(j)}*\X_{m}^{(j)} + b^{(j)}\right)
\mod p_j\right\}$ maps $\{0,1\}^{n.k}$ to $\Z_{p_j}$, where the values
of $a_{m}^{(j)}$ and $b^{(j)}$ are chosen from $\Z_{p_j}$, and
represented as $\ceil{\log_2 p_j}$-bit words.

  In general, we may wish to define a family of hash functions that
  maps $\{0,1\}^{n.k}$ to $\mc{D}$, where $\mc{D}$ is given by
  $\left(\Z_{p_0}\right)^{c_0} \times \left(\Z_{p_1}\right)^{c_1}
  \times \cdots \left(\Z_{p_{q-1}}\right)^{c_{q-1}}$ and
  $\prod_{j=0}^{q-1}p_j^{c_j} < 2^{n.k}$.  To achieve this, we first
  consider the iterated Cartesian product of ${\HH}^{(j)}$ with itself
  $c_j$ times, and denote it by $\left({\HH}^{(j)}\right)^{c_j}$, for
  every $j \in \{0, \ldots q-1\}$.  Finally, the desired family of
  hash functions is obtained as $\prod_{j=0}^{q-1}
  \left({\HH}^{(j)}\right)^{c_j}$.  Observe that every hash function
  $h$ in this family is a $\left(\sum_{l=0}^{q-1} c_l\right)$-tuple of
  hash functions.  Specifically, the $r^{th}$ component of $h$, for $r
  \le \left(\sum_{l=0}^{q-1} c_l\right)$, is given by
  $\left(\sum_{m=0}^{n.2^j-1} a_{m}^{(j)}* \X_{m}^{(j)} +
  b^{(j)}\right)\mod p_{j}$, where $\left(\sum_{i=0}^{j-1} c_i\right)
  < r \le \left(\sum_{i=0}^{j} c_i\right)$, and the $a_{m}^{(j)}$s and
  $b^{(j)}$ are elements of $\Z_{p_j}$.%

  The case when $k$ is not a power of $2$ is handled by splitting the
  words $x_i$ into slices of size $\ceil{k/2}$, $\ceil{k/2^2}$ and so
  on.  Note that the family of hash functions defined above depends
  only on $n$, $k$ and the vector $C = (c_0, c_1, \ldots c_{q-1})$,
  where $q = \ceil{\log_2 k}$.  Hence, we call this family
  ${\HH}_{SMT}(n, k, C)$.
Note also that by setting $c_i$ to $0 $ for all $i \neq \lfloor \log_2(k/2)
\rfloor$, and $c_i$ to $r $ for $i = \lfloor \log_2(k/2) \rfloor$ reduces
${\HH}_{SMT}$ to the family ${\HH}_{xor}$ of XOR-based bit-wise hash
functions mapping $\{0,1\}^{n.k}$ to $\{0,1\}^r$.  Therefore,
$H_{SMT}$ strictly generalizes ${\HH}_{xor}$.  

We summarize below important properties of the ${\HH}_{SMT}(n,k,C)$
class.  All proofs are
available in full version.. %
\begin{lemma} \label{lm:universal}
For every $\X \in \{0,1\}^{n.k}$ and every $\alpha \in \mc{D}$, $\prob[h(\X) = \alpha \mid h \xleftarrow{R} {\HH}_{SMT}(n,k,C)] = \prod_{j=0}^{|C|-1}{p_j}^{-c_j}$
\end{lemma}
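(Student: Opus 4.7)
The plan is to exploit the product structure of ${\HH}_{SMT}(n,k,C) = \prod_{j=0}^{q-1} ({\HH}^{(j)})^{c_j}$. Since each of the $\sum_{l=0}^{q-1} c_l$ components of a random $h$ is drawn independently, with its own fresh coefficients $a_m^{(j)}$ and offset $b^{(j)}$ from $\Z_{p_j}$, the event $h(\X)=\alpha$ decomposes into independent events on each component. Writing $\alpha = (\alpha_1,\ldots,\alpha_{\sum_l c_l}) \in \mc{D}$, the plan is to show
\[
\prob[h(\X)=\alpha] \;=\; \prod_{r} \prob\bigl[h_r(\X)=\alpha_r\bigr],
\]
and then establish that each single-component probability equals $1/p_j$ when $h_r \in {\HH}^{(j)}$.

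The key step is thus the single-component claim: for any fixed $\X \in \{0,1\}^{n.k}$, any $j \in \{0,\ldots,q-1\}$, and any $\alpha_r \in \Z_{p_j}$, if $h^{(j)}$ is drawn uniformly from ${\HH}^{(j)}$ then $\prob[h^{(j)}(\X)=\alpha_r] = 1/p_j$. The plan here is a standard ``random offset'' argument: condition on arbitrary values of the $a_m^{(j)}$'s and let $S = \sum_{m=0}^{n\cdot 2^j-1} a_m^{(j)} \X_m^{(j)} \bmod p_j$ be the (now-fixed) partial sum. Then $h^{(j)}(\X) = (S + b^{(j)}) \bmod p_j$, and since $b^{(j)}$ is chosen uniformly from $\Z_{p_j}$, the map $b^{(j)} \mapsto (S+b^{(j)}) \bmod p_j$ is a bijection of $\Z_{p_j}$ onto itself. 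Therefore $h^{(j)}(\X)$ is uniform over $\Z_{p_j}$ conditional on any choice of the $a_m^{(j)}$'s, and averaging over those choices preserves the uniform marginal.

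Putting the pieces together, for each $j$ there are $c_j$ independent copies of ${\HH}^{(j)}$ contributing a factor of $1/p_j$ each, so the joint probability is $\prod_{j=0}^{q-1} (1/p_j)^{c_j} = \prod_{j=0}^{|C|-1} p_j^{-c_j}$, as claimed.

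I do not anticipate a real obstacle; the only subtlety is being explicit that no primality or field structure is needed for this lemma, since additive translation by a uniform element of $\Z_{p_j}$ is a bijection regardless of whether $p_j$ is prime. (Primality will only matter later, for the $2$-universality statement where one needs multiplicative invertibility.) A minor bookkeeping point is to make clear that the widths of the slices $\X_m^{(j)}$ are at most $k/2^j \le \ceil{\log_2 p_j}$, so that each $\X_m^{(j)} \bmod p_j$ equals $\X_m^{(j)}$ and no aliasing enters the argument; this follows directly from the choice $p_j \ge 2^{k/2^j}$.
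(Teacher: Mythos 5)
Your proof is correct and takes essentially the same approach as the paper's: condition on the coefficients $a_m^{(j)}$, observe that the uniformly random offset $b^{(j)}$ makes each component uniform over $\Z_{p_j}$ (the paper phrases this as ``there is exactly one $b^{(j)}$'' for each target value), and multiply over the independently chosen components. Your side remarks---that primality is not needed for this lemma and that $2^{k/2^j} \le p_j$ prevents aliasing---are accurate but inessential additions.
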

\begin{theorem}\label{th-universal}
For every $\alpha_1, \alpha_2 \in \mc{D}$ and every distinct 
$\X_1,\X_2 \in \{0,1\}^{n.k}$, $\prob[(h(\X_1) = \alpha_1 \wedge h(\X_2) = \alpha_2 )  \mid 
h \xleftarrow{R}  {\HH}_{SMT}(n,k,C)] = \prod_{j=0}^{|C|-1}({p_j})^{-2.c_j}$. 
Therefore, $ {\HH}_{SMT}(n,k,C)$ is pairwise independent.
\end{theorem}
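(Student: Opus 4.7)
The plan is to reduce the theorem to a statement about each individual component hash function in the iterated Cartesian product, and then combine the per-component results using the independence of the coefficients across components. Since the coefficients $a_m^{(j)}$ and $b^{(j)}$ across different values of $j$ and across the $c_j$ iterations for a given $j$ are all drawn independently and uniformly from the respective $\Z_{p_j}$, the joint event $\{h(\X_1) = \alpha_1 \wedge h(\X_2) = \alpha_2\}$ factors into a product of $\sum_{l=0}^{q-1} c_l$ independent per-component events. So it suffices to show that for a single draw $h^{(j)} \xleftarrow{R} {\HH}^{(j)}$ and any targets $\beta_1, \beta_2 \in \Z_{p_j}$, the joint probability equals $1/p_j^2$; taking the product across all $c_j$ copies for each $j$ then yields the claimed $\prod_j p_j^{-2 c_j}$.

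Next I would handle the single-component statement, which is the standard linear-hashing over a field argument adapted to the sliced setting. The key preliminary observation is that slicing is a bijection: concatenating the slices $\X_m^{(j)}$ recovers $\X$, so $\X_1 \neq \X_2$ implies $\X_1^{(j)} \neq \X_2^{(j)}$ as vectors over slices. Hence there exists an index $m^\ast$ with $\X_{1,m^\ast}^{(j)} \neq \X_{2,m^\ast}^{(j)}$. Because each slice is a word of width $k/2^j$, its value lies in $\{0,1,\ldots,2^{k/2^j}-1\}$, and by choice of $p_j \geq 2^{k/2^j}$ these two distinct slice values remain distinct when interpreted in $\Z_{p_j}$.

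I would then condition on all coefficients $a_m^{(j)}$ with $m \neq m^\ast$. The residual randomness is $(a_{m^\ast}^{(j)}, b^{(j)}) \in (\Z_{p_j})^2$, and the map sending $(a_{m^\ast}^{(j)}, b^{(j)})$ to $(h^{(j)}(\X_1), h^{(j)}(\X_2))$ is affine, with linear part given by the matrix
$$M = \begin{pmatrix} \X_{1,m^\ast}^{(j)} & 1 \\ \X_{2,m^\ast}^{(j)} & 1 \end{pmatrix} \pmod{p_j}.$$
Its determinant is $\X_{1,m^\ast}^{(j)} - \X_{2,m^\ast}^{(j)}$, which is a nonzero element of the field $\Z_{p_j}$ by the preceding paragraph. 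Hence $M$ is invertible over $\Z_{p_j}$, so the affine map is a bijection on $(\Z_{p_j})^2$, which implies $(h^{(j)}(\X_1), h^{(j)}(\X_2))$ is uniformly distributed on $(\Z_{p_j})^2$. This gives the per-component probability $1/p_j^2$, and since the conditional probability does not depend on the conditioned coefficients, the unconditional joint probability is also $1/p_j^2$. Lemma \ref{lm:universal} then follows as the marginal, and pairwise independence is immediate from the joint formula.

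The main obstacle is not the linear-algebra step itself, which is routine once the framework is in place, but rather the two bookkeeping facts that justify the reduction: first, that slicing preserves distinctness, and second, that $p_j \geq 2^{k/2^j}$ ensures no aliasing of slices in $\Z_{p_j}$. Both are design choices baked into the construction of ${\HH}_{SMT}(n,k,C)$, so once they are stated explicitly the proof reduces to applying the standard field-based $2$-universality argument componentwise and multiplying the resulting probabilities.
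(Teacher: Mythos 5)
Your proof is correct, and while its outer structure (factoring the joint event into $\sum_j c_j$ independent per-component events and multiplying the per-component probabilities) matches the paper's, the single-component argument is genuinely different. The paper writes the joint probability as conditional times marginal and then splits into cases according to whether $\X_1$ and $\X_2$ are linearly dependent or independent over $\Z_{p_0}$, reducing the independent case to the situation where both are basis vectors and invoking Fermat's Little Theorem to solve for the relevant coefficient. You instead isolate a slice index $m^\ast$ where the two inputs differ, condition on all other coefficients, and observe that $(a_{m^\ast}^{(j)}, b^{(j)}) \mapsto (h^{(j)}(\X_1), h^{(j)}(\X_2))$ is an affine bijection of $(\Z_{p_j})^2$ because the $2\times 2$ matrix has nonzero determinant $\X_{1,m^\ast}^{(j)} - \X_{2,m^\ast}^{(j)}$ in the field. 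Your route is arguably tighter: it treats all distinct pairs uniformly, makes explicit the two design facts doing the real work (slicing preserves distinctness, and $p_j \ge 2^{k/2^j}$ prevents aliasing of slices modulo $p_j$ --- a fact the paper states in its motivating discussion but does not restate in the proof), and avoids the paper's ``reduce to basis vectors'' step, which as written is a shortcut that does not literally follow from linearity alone (pairwise independence for pairs of basis vectors does not by itself transfer to arbitrary distinct pairs). The paper's approach, in exchange, produces the conditional and marginal probabilities as separate, reusable statements; yours obtains both at once from uniformity of the joint distribution on $(\Z_{p_j})^2$, which also immediately re-derives Lemma~\ref{lm:universal} as the marginal.
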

\paragraph{Gaussian Elimination}
The practical success of XOR-based bit-level hashing techniques for
propositional model counting owes a lot to solvers like
CryptoMiniSAT~\cite{SNC09} that use Gaussian Elimination to
efficiently reason about XOR constraints.  It is significant that the
constraints arising from ${\HH}_{SMT}$ are linear modular equalities
that also lend themselves to efficient Gaussian Elimination.  %
We believe that integration of Gaussian Elimination engines in {\SMT}
solvers will significantly improve the performance of
hashing-based word-level model counters.
\section{Algorithm}\label{sec:algo}
We now present {\SMTApproxMC}, a word-level hashing-based approximate
model counting algorithm.  {\SMTApproxMC} takes as inputs a formula
$F$ in the theory of fixed-width words, a tolerance $\varepsilon ~(>
0)$, and a confidence $1-\delta \in (0,1]$.  It returns an estimate of
  $|\satisfying{F}|$ within the tolerance $\varepsilon$, with
  confidence $1-\delta$.  The formula $F$ is assumed to have $n$
  variables, each of width $k$, in its support. The central idea of
  {\SMTApproxMC} is to randomly partition the solution space of $F$
  into ``small" cells of roughly the same size, using word-level hash
  functions from ${\HH}_{SMT}(n, k, C)$, where $C$ is incrementally
  computed.  The check for ``small''-ness of cells is done using a
  word-level {\SMT} solver.  The use of word-level hash functions and
  a word-level {\SMT} solver allows us to directly harness the power
  of {\SMT} solving in model counting.

  The pseudocode for {\SMTApproxMC} is presented in
  Algorithm~\ref{alg:SMTMC}.
  Lines~\ref{line:weightmc-init-start}--~\ref{line:weightmc-init-end}
  initialize the different parameters. Specifically, $\mathrm{pivot}$
  determines the maximum size of a ``small'' cell as a function of
  $\varepsilon$, and $t$ determines the number of times
  {\SMTApproxMCCore} must be invoked, as a function of $\delta$. The value of $t$ is determined by technical arguments in the 
  proofs of our theoretical guarantees, and is not based on experimental 
  observations %
  Algorithm {\SMTApproxMCCore} lies at the heart of {\SMTApproxMC}.
  Each invocation of {\SMTApproxMCCore} either returns an approximate
  model count of $F$, or $\bot$ (indicating a failure).  In the former
  case, we collect the returned value, $m$, in a list $M$ in
  line~\ref{line:weightmc-loop-core-update}. Finally, we compute the
  median of the approximate counts in $M$, and return this as
  $\mathrm{FinalCount}$.
\begin{algorithm}[h]
	\caption{\SMTApproxMC$(F, \varepsilon, \delta,k)$}
	\label{alg:SMTMC}
	\begin{algorithmic}[1]
		
		\State $\mathrm{counter} \gets 0; M \gets \mathsf{emptyList}; $ \label{line:weightmc-init-start}
		\State $\mathrm{pivot} \gets 2 \times \lceil e^{-3/2} \left(1 + \frac{1}{\varepsilon}\right)^2 \rceil$;
		\State $t \gets \left\lceil 35\log_2 (3/\delta) \right\rceil$; \label{line:weightmc-init-end}
		\Repeat \label{line:weightmc-loop-start}
		\State $m \gets \SMTApproxMCCore(F,\mathrm{pivot},k)$; \label{line:weightmc-loop-core-invokation}
		\State $\mathrm{counter} \gets \mathrm{counter}+1$;
		\If {$ m \neq \bot$}
		\State $ \mathsf{AddToList}(M,m)$;
		\label{line:weightmc-loop-core-update}
		\EndIf
		\Until { $(\mathrm{counter} < t)$} \label{line:weightmc-loop-end}
		\State $\mathrm{FinalCount} \gets \mathsf{FindMedian}(M)$; \label{line:weightmc-median}
		\State \Return $\mathrm{FinalCount}$;
		\label{line:weightmc-return}
	\end{algorithmic}
\end{algorithm}

The pseudocode for {\SMTApproxMCCore} is shown in
Algorithm~\ref{alg:SMTMCCore}.  This algorithm takes as inputs a
word-level {\SMT} formula $F$, a threshold $\mathrm{pivot}$, and the
width $k$ of words in $\mathsf{sup}(F)$.  We assume access to a
subroutine {\BoundedSMT} that accepts a word-level {\SMT} formula
$\varphi$ and a threshold $\mathrm{pivot}$ as inputs, and returns
$\mathrm{pivot}+1$ solutions of $\varphi$ if $|\satisfying{\varphi}| >
\mathrm{pivot}$; otherwise it returns $\satisfying{\varphi}$. In
lines~\ref{line:SMTMCCore-small-check-init}--~\ref{line:SMTMCCore-small-check-end}
of Algorithm~\ref{alg:SMTMCCore}, we return the exact count if $|R_F|
\leq \mathrm{pivot}$.  Otherwise, we initialize $C$ by setting $C[0]$
to $0$ and $C[1]$ to $1$, where $C[i]$ in the pseudocode refers to
$c_i$ in the previous section's discussion.  This choice of
initialization is motivated by our experimental observations.  We also
count the number of cells generated by an arbitrary hash function from
${\HH}_{SMT}(n,k,C)$ in $\mathrm{numCells}$.  The loop in
lines~\ref{line:SMTMCCore-loop-start}--\ref{line:SMTMCCore-loop-end}
iteratively partitions $R_F$ into cells using randomly chosen hash
functions from ${\HH}_{SMT}(n, k, C)$.  The value of $i$ in each
iteration indicates the extent to which words in the support of $F$
are sliced when defining hash functions in ${\HH}_{SMT}(n, k, C)$ --
specifically, slices that are $\ceil{k/2^i}$-bits or more wide are
used.  The iterative partitioning of $R_F$ continues until a randomly
chosen cell is found to be ``small'' (i.e. has $\geq 1$ and $\leq
\mathrm{pivot}$ solutions), or the number of cells exceeds $2^{n.k}$,
rendering further partitioning meaningless.  The random choice of $h$
and $\alpha$ in lines~\ref{line:SMTMCCore-choose-hash}
and~\ref{line:SMTMCCore-choose-alpha} ensures that we pick a random
cell.  The call to {\BoundedSMT} returns at most $\mathrm{pivot}+1$
solutions of $F$ within the chosen cell in the set $Y$.
If $|Y| > \mathrm{pivot}$, the cell is deemed to be large, and the
algorithm partitions each cell further into $p_i$ parts.  This is done
by incrementing $C[i]$ in line~\ref{line:SMTMCCore-increase-c-i}, so
that the hash function chosen from ${\HH}_{SMT}(n,k,C)$ in the next
iteration of the loop generates $p_i$ times more cells than in the
current iteration.  On the other hand, if $Y$ is empty and $p_i > 2$,
the cells are too small (and too many), and the algorithm reduces the
number of cells by a factor of $p_{i+1}/p_{i}$ (recall $p_{i+1} \leq
p_i$) by setting the values of $C[i]$ and $C[i+1]$ accordingly (see
lines\ref{line:SMTMCCore-refine-c-i-start}
--\ref{line:SMTMCCore-refine-c-i-end}).  If $Y$ is non-empty and has
no more than $\mathrm{pivot}$ solutions, the cells are of the right
size, and we return the estimate $|Y|\times \mathrm{numCells}$.  In
all other cases, ${\SMTApproxMCCore}$ fails and returns $\bot$.

\begin{algorithm}
	
	\caption{\SMTApproxMCCore$(F,\mathrm{pivot},k)$}
	\label{alg:SMTMCCore}
	\begin{algorithmic}[1]
		\State $Y \gets \BoundedSMT(F,\mathrm{pivot})$;
		\label{line:SMTMCCore-small-check-init}
		\If {$ |Y| \leq \mathrm{pivot})$} \Return $|Y|$;
		\label{line:SMTMCCore-small-check-end}
		\Else
			\State $C \gets \text{emptyVector}$; $C[0] \gets 0$; $C[1] \gets 1$;
			\State $i \gets 1$;~ $\text{numCells} \gets p_1$;
		
				\Repeat \label{line:SMTMCCore-loop-start}
					\State Choose $h$ at random from ${\HH}_{SMT}(n,k,C)$; \label{line:SMTMCCore-choose-hash}
					\State Choose $\alpha$ at random from $\prod_{j = 0}^i \left(\Z_{p_j}\right)^{C[j]}$; \label{line:SMTMCCore-choose-alpha}
					\State $Y \gets \BoundedSMT(F \wedge (h(\X) = \alpha), \mathrm{pivot})$;
					 \label{line:SMTMCCore-call-BoundedWeightSAT}
					 \If {($|Y| > \mathrm{pivot}$)}\label{line:SMTMCCore-cell-comparison}
					 	\State $C[i] \gets C[i]+1$;\label{line:SMTMCCore-increase-c-i}
                                                \State $\text{numCells} \gets \text{numCells}\times p_i$;
					 \EndIf	
					  \If {($|Y| = 0$)}
					  	\If {$p_i > 2$}
					  		\State $C[i] \gets C[i]-1$;
					  		\label{line:SMTMCCore-refine-c-i-start}
					  		\State $i \gets i+1$; $C[i] \gets 1$; 
                                                        \State $\text{numCells} \gets \text{numCells}\times(p_{i+1}/p_i)$;
					  			\label{line:SMTMCCore-refine-c-i-end}
					  	\Else
					  		\State break; \label{line:SMTMCCore-break}
					  	\EndIf
					  \EndIf
				\Until {$((0 < |Y| \leq \mathrm{pivot})$ or $(\text{numCells} > 2^{n.k}))$}  \label{line:SMTMCCore-loop-end}
			\If {$((|Y| > \mathrm{pivot})$ {\bfseries or} $(|Y| = 0))$}
				\Return $\bot$;
			\Else 
				~~\Return $ |Y|\times \text{numCells}$; \label{line:SMTMCCore-return}
			\EndIf
		\EndIf
	\end{algorithmic}
\end{algorithm}

Similar to the analysis of {\ApproxMC}~\cite{CMV13b}, the current theoretical analysis of {\SMTApproxMC} assumes that for some $C$ during the execution of ${\SMTApproxMCCore}$, $\log |R_F| - \log(\text{numCells}) +1 = \log(\mathrm{pivot})$. We leave analysis of {\SMTApproxMC} without above assumption to future work. The following theorems concern the correctness and performance of
{\SMTApproxMC}.  %
\begin{theorem}\label{thm:guarantees}
Suppose an invocation of {\SMTApproxMC}$(F, \varepsilon, \delta, k)$
returns $\mathrm{FinalCount}$.  Then $\prob\left[
  (1+\varepsilon)^{-1}|R_F| \leq \mathrm{FinalCount} \leq
  (1+\varepsilon)|R_F| \right] \geq 1-\delta$
\end{theorem}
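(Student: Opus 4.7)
The plan is to follow the standard two-level analysis of ApproxMC-style algorithms, leveraging the pairwise independence of $\HH_{SMT}(n,k,C)$ established in Theorem~\ref{th-universal}. I would decompose the argument into (a) a \emph{single-run} claim showing that one invocation of \SMTApproxMCCore\ returns a value in $I_\varepsilon = [|\satisfying{F}|/(1+\varepsilon),\,(1+\varepsilon)|\satisfying{F}|]$ with some constant probability $p^\ast > 1/2$, and (b) a \emph{median-amplification} step that boosts this constant confidence into $1-\delta$.

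For (a), I would invoke the standing assumption that some iteration of the loop in \SMTApproxMCCore\ attains a $C$, with associated cell count $N = \text{numCells}$, satisfying $\log|\satisfying{F}| - \log N + 1 = \log(\text{pivot})$; equivalently, $|\satisfying{F}|/N = \text{pivot}/2$. Fix such an iteration, and let $h$ and $\alpha$ be the hash function and bin label drawn in lines~\ref{line:SMTMCCore-choose-hash}--\ref{line:SMTMCCore-choose-alpha}. Set $Z = |\satisfying{F} \cap h^{-1}(\alpha)|$. By Lemma~\ref{lm:universal}, $\expect[Z] = |\satisfying{F}|/N = \text{pivot}/2$, and by Theorem~\ref{th-universal} the indicators $\mathbf{1}[h(y) = \alpha]$ for distinct $y \in \satisfying{F}$ are pairwise independent, so $\var[Z] \le \expect[Z]$. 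A Chebyshev inequality applied to $Z$, combined with the choice $\text{pivot} = 2\lceil e^{-3/2}(1 + 1/\varepsilon)^2 \rceil$, lower-bounds $\prob[N \cdot Z \in I_\varepsilon]$ by a constant $p^\ast$ strictly above $1/2$. Branches of \SMTApproxMCCore\ that return $\bot$, or that stop at a different value of $C$, are absorbed into the failure event of the corresponding run.

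For (b), the $t$ independent calls of \SMTApproxMCCore\ yield mutually independent success indicators, each equal to $1$ when the returned value lies in $I_\varepsilon$, with individual success probability at least $p^\ast > 1/2$. A Chernoff bound on the number of successes together with the setting $t = \lceil 35 \log_2(3/\delta) \rceil$ ensures that strictly more than half of the entries of $M$ lie in $I_\varepsilon$ with probability at least $1-\delta$; whenever that occurs, $\mathsf{FindMedian}(M)$ itself lies in $I_\varepsilon$, completing the argument.

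The chief obstacle lies in step (a): \SMTApproxMCCore\ adaptively walks through $C$-vectors based on observed cell sizes, so one must verify that the draws of $h$ and $\alpha$ at the good iteration are genuinely fresh randomness, independent of the history that led the algorithm there, and that the break in line~\ref{line:SMTMCCore-break} does not foreclose the analysis. The stated assumption on the existence of a suitable $C$ during execution bypasses the more delicate task of analysing the full adaptive trajectory; with it in hand, the remaining Chebyshev and Chernoff calculations are routine once the constants are pinned down so that the factor $35$ in $t$ is enough to convert single-run confidence $p^\ast$ into overall confidence $1-\delta$.
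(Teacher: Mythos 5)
Your overall architecture is the same as the paper's: a single-run guarantee for {\SMTApproxMCCore} derived from Chebyshev's inequality via the pairwise independence of ${\HH}_{SMT}$ (the paper's Lemmas~2 and~3), followed by median amplification over $t = \lceil 35\log_2(3/\delta)\rceil$ independent runs. Your step~(b) is sound and essentially identical to the paper's Theorem~1 argument; the paper bounds the binomial tail directly, showing $\eta(t,t/2,0.4) \le 3\cdot(0.98)^t \le \delta$, rather than invoking a named Chernoff bound, but that difference is cosmetic.

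The genuine gap is in step~(a), at precisely the point you flag as ``the chief obstacle'' and then set aside. You lower-bound the single-run success probability by applying Chebyshev only at the iteration whose $C$ satisfies $\log|R_F| - \log(\numCells) + 1 = \log(\mathrm{pivot})$, and propose to ``absorb'' runs that stop at a different $C$ into the failure event. That move does not deliver $p^* > 1/2$: the loop can terminate at an \emph{earlier} iteration (coarser partition, larger expected cell size) whenever the randomly chosen cell happens to contain between $1$ and $\mathrm{pivot}$ solutions, and it then returns $|Y|\times\numCells$ for that coarser $\numCells$, which may fall outside $I_\varepsilon$. If every such premature stop is simply declared a failure without bounding its probability, the Chebyshev bound at the good iteration gives no lower bound at all on the probability that the run succeeds. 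The paper's Lemma~4 closes this hole with a union bound over the bad-stop events $F_\ell$ across iterations: Lemma~3 shows that the failure probability at an iteration with $\numCells_\ell$ cells decays geometrically as the partition gets coarser (each halving of $\numCells$ halves the bound), so the total failure probability is at most $e^{-3/2}(1 + \tfrac{1}{2} + \tfrac{1}{4}) < 0.4$, yielding the single-run success probability of at least $0.6$ that feeds the median step. This per-level argument uses pairwise independence at \emph{every} $C$ the algorithm visits, not only the final one, and is the missing ingredient in your proposal; the standing assumption on the existence of a suitable $C$ does not substitute for it.
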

\begin{theorem}\label{thm:complexity}
{\SMTApproxMC}$(F, \varepsilon, \delta, k)$ runs in time polynomial in $|F|$,
$1/\varepsilon$ and $\log_2(1/\delta)$ relative to an {\NP}-oracle.%
\end{theorem}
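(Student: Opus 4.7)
The plan is to bound, in turn, (i) the number of top-level iterations of \SMTApproxMC, (ii) the number of iterations of the inner \texttt{Repeat} loop in \SMTApproxMCCore, and (iii) the cost of each call to \BoundedSMT\ in terms of \NP-oracle queries. Composing these three bounds yields the claimed overall complexity.

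For (i), the outer loop of \SMTApproxMC\ runs $t = \lceil 35\log_2(3/\delta)\rceil$ times, which is already polynomial in $\log_2(1/\delta)$. For (ii), I would argue that in each iteration of the inner loop of \SMTApproxMCCore\ exactly one of two things happens: either $C[i]$ is incremented (in which case $\numCells$ is multiplied by $p_i \ge 2$), or $i$ is incremented together with a ``refinement'' of the slicing. Since the loop terminates as soon as $\numCells > 2^{n.k}$, the first kind of step can occur at most $n.k$ times. Since $i$ is monotonically non-decreasing and cannot exceed $q = \lceil \log_2 k\rceil$, the second kind of step can occur at most $O(\log k)$ times. Hence the inner loop runs for at most $O(n.k)$ iterations, which is polynomial in $|F|$. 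Sampling a uniformly random $h \in \HH_{SMT}(n,k,C)$ and $\alpha \in \mc{D}$ only requires drawing $O(n.k)$ random bits and constructing the corresponding symbolic modular-arithmetic constraint, whose size is polynomial in $|F|$.

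For (iii), I would implement \BoundedSMT$(\varphi,\mathrm{pivot})$ via the standard ``find-and-block'' loop: repeatedly query the \NP\ oracle for a satisfying assignment of $\varphi \wedge \bigwedge_{j} \neg \sigma_j$, extract each new model bit-by-bit using $O(n.k)$ additional oracle queries to fix variable values, and stop after $\mathrm{pivot}+1$ models or after the oracle reports unsatisfiability. Since the theory of fixed-width words with linear modular arithmetic admits polynomial-size certificates, each query reduces in polynomial time to an \NP\ query. Because $\mathrm{pivot} = 2\lceil e^{-3/2}(1+1/\varepsilon)^2\rceil$ is polynomial in $1/\varepsilon$, each call to \BoundedSMT\ uses $\mathrm{poly}(|F|,1/\varepsilon)$ oracle queries on formulas of size $\mathrm{poly}(|F|)$.

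Multiplying the three bounds gives total running time $\mathrm{poly}(|F|, 1/\varepsilon, \log_2(1/\delta))$ relative to an \NP-oracle, and computing the median of at most $t$ values in line~\ref{line:weightmc-median} adds only polynomial overhead. The main technical obstacle is step (ii): one must check carefully that the two branches inside the inner loop (the $C[i]$-increment branch and the refinement branch) cannot conspire to run indefinitely. The key observation that the refinement branch strictly increases $i$, which is capped by $\lceil \log_2 k\rceil$, together with the explicit termination condition $\numCells > 2^{n.k}$, closes this gap and makes the bound rigorous.
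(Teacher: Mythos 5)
Your proof is correct and follows essentially the same decomposition as the paper's own argument: the outer loop runs $t = \lceil 35\log_2(3/\delta)\rceil$ times, the inner loop of {\SMTApproxMCCore} makes polynomially many calls to {\BoundedSMT}, and each such call costs $\mathrm{pivot}+1 = \mathcal{O}(1/\varepsilon^2)$ {\NP}-oracle queries on formulas of size polynomial in $|F|$. You actually give more detail than the paper on the inner-loop bound (the paper simply asserts $\mathcal{O}(|F|)$ calls to {\BoundedSMT}); the only point to tighten is that a refinement step multiplies $\numCells$ by $p_{i+1}/p_i \le 1$ and can therefore shrink it, so the increment branch is bounded not by $n.k$ alone but by $n.k$ plus the total shrinkage accumulated over the at most $\lceil\log_2 k\rceil$ refinement steps, which is still polynomial and leaves your conclusion intact.
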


The proofs of Theorem~\ref{thm:guarantees} and ~\ref{thm:complexity} can be found in full version..
\section{Experimental Methodology and Results}\label{sec:expts}
\begin{table*}
	\scriptsize
	\centering
	\begin{tabular}{|c|c|c|c|c|c|}
		\hline
		Benchmark & Total Bits & Variable Types & \# of Operations & \shortstack{SMTApproxMC\\ time(s)} & \shortstack{CDM\\ time(s)} \\
		\hline
		squaring27 & 59 & \{1: 11, 16: 3\} & 10 & -- & 2998.97 \\ \hline
		squaring51 & 40 & \{1: 32, 4: 2\} & 7 & 3285.52 & 607.22 \\ \hline
		1160877 & 32 & \{8: 2, 16: 1\} & 8 & 2.57 & 44.01 \\ \hline
		1160530 & 32 & \{8: 2, 16: 1\} & 12 & 2.01 & 43.28 \\ \hline
		1159005 & 64 & \{8: 4, 32: 1\} & 213 & 28.88 & 105.6 \\ \hline
		1160300 & 64 & \{8: 4, 32: 1\} & 1183 & 44.02 & 71.16 \\ \hline
		1159391 & 64 & \{8: 4, 32: 1\} & 681 & 57.03 & 91.62 \\ \hline
		1159520 & 64 & \{8: 4, 32: 1\} & 1388 & 114.53 & 155.09 \\ \hline
		1159708 & 64 & \{8: 4, 32: 1\} & 12 & 14793.93 & -- \\ \hline
		1159472 & 64 & \{8: 4, 32: 1\} & 8 & 16308.82 & -- \\ \hline
		1159115 & 64 & \{8: 4, 32: 1\} & 12 & 23984.55 & -- \\ \hline
		1159431 & 64 & \{8: 4, 32: 1\} & 12 & 36406.4 & -- \\ \hline
		1160191 & 64 & \{8: 4, 32: 1\} & 12 & 40166.1 & -- \\ \hline
	\end{tabular}
	\scriptsize
	\caption{Runtime performance of {\SMTApproxMC} vis-a-vis {\CDM} for a subset of benchmarks.}
	\label{tab:performance-comparison}
\end{table*}
To evaluate the performance and effectiveness of {\SMTApproxMC}, we
built a prototype implementation and conducted extensive
experiments. Our suite of benchmarks consisted of more than $150$
problems arising from diverse domains such as reasoning about
circuits, planning, program synthesis and the like. For lack
of space, we present results for only for a subset of the benchmarks.

For purposes of comparison, we also implemented a state-of-the-art
bit-level hashing-based approximate model counting algorithm for
bounded integers, proposed by~\cite{ChDiMa14}.  Henceforth, we refer
to this algorithm as {\CDM}, after the authors' initials. Both model
counters used an overall timeout of $12$ hours per benchmark, and a
{\BoundedSMT} timeout of $2400$ seconds per call.  Both used
{\Boolector}, a state-of-the-art {\SMT} solver for fixed-width
words~\cite{BB09}.  Note that {\Boolector} (and other popular {\SMT}
solvers for fixed-width words) does not yet implement Gaussian
elimination for linear modular equalities; hence our experiments did
not enjoy the benefits of Gaussian elimination. We employed the
Mersenne Twister to generate pseudo-random numbers, and each thread
was seeded independently using the Python \texttt{random} library.
All experiments used $\varepsilon = 0.8$ and $\delta = 0.2$. Similar to {\ApproxMC}, we determined value of $t$ based on tighter analysis offered by proofs. For detailed discussion, we refer the reader to Section 6 in ~\cite{CMV13b}. Every
experiment was conducted on a single core of high-performance computer
cluster, where each node had a 20-core, 2.20 GHz Intel Xeon processor,
with 3.2GB of main memory per core.

We sought answers to the following questions from our experimental
evaluation:
\begin{enumerate}
\item How does the performance of {\SMTApproxMC} compare with that
  of a bit-level hashing-based counter like {\CDM}?
\item How do the approximate counts returned by {\SMTApproxMC} 
compare with exact counts?
\end{enumerate}
Our experiments show that {\SMTApproxMC} significantly outperforms
{\CDM}  for a large class of benchmarks.
Furthermore, the counts returned by {\SMTApproxMC} are highly accurate
and the observed geometric tolerance($\varepsilon_{obs}$) = $0.04$.

\paragraph{Performance Comparison}
Table~\ref{tab:performance-comparison} presents the result of
comparing the performance of {\SMTApproxMC} vis-a-vis {\CDM} on a subset of
our benchmarks. In Table~\ref{tab:performance-comparison}, column
1 gives the benchmark identifier, column 2 gives the sum of widths of
all variables, column 3 lists the number of variables ($\mathrm{numVars}$) for
each corresponding width ($\mathrm{w}$) in the format $\{\mathrm{w}:\mathrm{numVars}\}$. To
indicate the complexity of the input formula, we present the number of
operations in the original {\SMT} formula in column 4. The runtimes
for {\SMTApproxMC} and {\CDM} are presented in columns 5 and column 6
respectively.  We use ``--" to denote timeout after $12$ hours.
Table~\ref{tab:performance-comparison} clearly shows that
{\SMTApproxMC} significantly outperforms {\CDM} (often by $2$-$10$
times) for a large class of benchmarks. In particular, we observe that
{\SMTApproxMC} is able to compute counts for several cases where
{\CDM} times out. %

Benchmarks in our suite exhibit significant heterogeneity in the
widths of words, and also in the kinds of word-level operations used.
Propositionalizing all word-level variables eagerly, as is done in
{\CDM}, prevents the {\SMT} solver from making full use of word-level
reasoning.  In contrast, our approach allows the power of word-level
reasoning to be harnessed if the original formula $F$ and the hash
functions are such that the {\SMT} solver can reason about them
without bit-blasting.  This can lead to significant performance
improvements, as seen in Table~\ref{tab:performance-comparison}.
Some benchmarks, however, have heterogenous bit-widths and
heavy usage of operators like $\extract{x}{n_1}{n_2}$ and/or
word-level multiplication.  It is known that word-level reasoning in
modern {\SMT} solvers is not very effective for such cases, and the
solver has to resort to bit-blasting.  Therefore, using word-level
hash functions does not help in such cases. %
We believe this contributes to
the degraded performance of {\SMTApproxMC} vis-a-vis {\CDM} in a
subset of our benchmarks.  This also points to an interesting direction of
future research: to find the right hash function for a benchmark by utilizing {\SMT} solver's architecture.

\paragraph*{Quality of Approximation}
To measure the quality of the counts returned by {\SMTApproxMC}, we
selected a subset of benchmarks that were small enough to be
bit-blasted and fed to {\sharpSATTool}~\cite{Thurley2006} -- a
state-of-the-art exact model counter.  Figure~\ref{fig:quality}
compares the model counts computed by {\SMTApproxMC} with the bounds
obtained by scaling the exact counts (from {\sharpSATTool}) with
the tolerance factor $(\varepsilon = 0.8)$. The y-axis represents
model counts on log-scale while the x-axis presents benchmarks ordered
in ascending order of model counts. We observe that for {\em all} the
benchmarks, {\SMTApproxMC} computes counts within the tolerance.
Furthermore, for each instance, we computed observed tolerance (
$\varepsilon_{obs}$) as $\frac{\mathrm{count}}{|R_F|} -1$, if $\mathrm{count}
\geq |R_F|$, and $\frac{|R_F|}{\mathrm{count}}-1$ otherwise, where
$|R_F|$ is computed by {\sharpSATTool} and $\mathrm{count}$ is computed
by {\SMTApproxMC}. We observe that the geometric mean of $\varepsilon_{obs}$ across all the benchmarks is only $0.04$ -- far less (i.e. closer to the exact count) than the theoretical guarantee of
$0.8$.  

\begin{figure}

\centering
\includegraphics[width=0.5\textwidth]{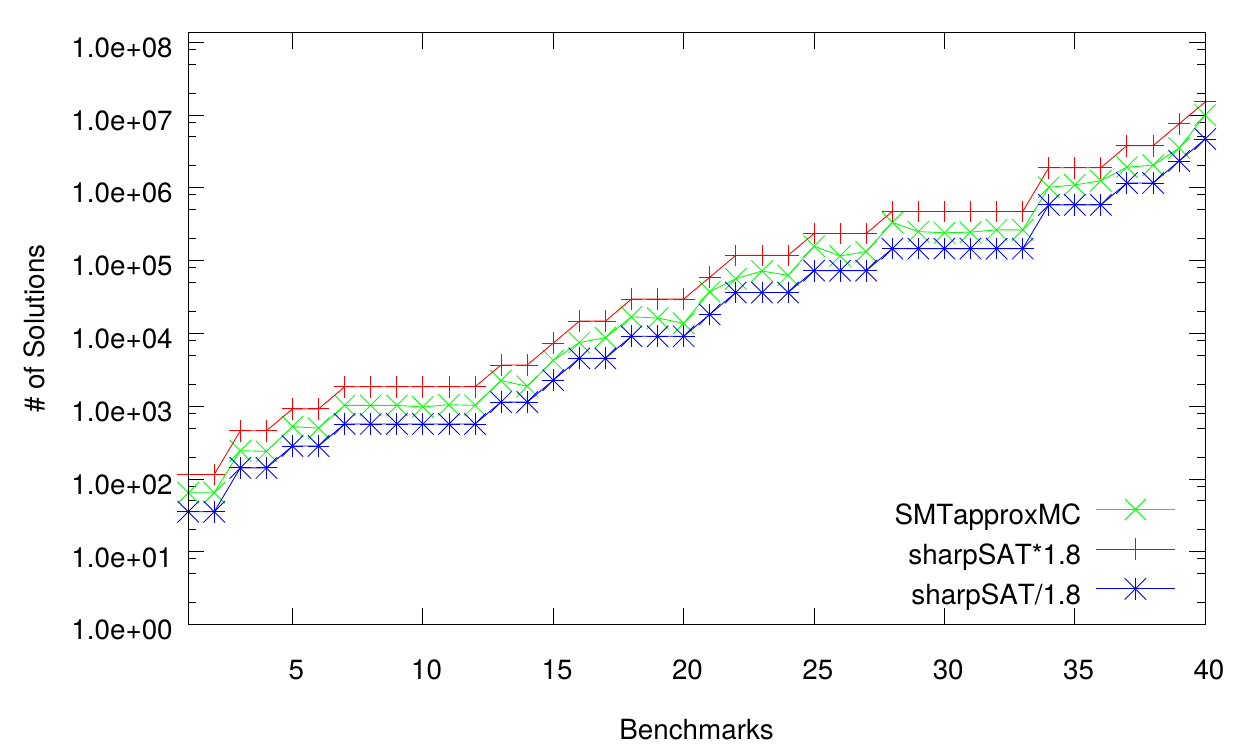}
\caption{Quality of counts computed by {\SMTApproxMC} vis-a-vis exact counts}
\label{fig:quality}
\end{figure}

\section{Conclusions and Future Work}\label{sec:concl}
Hashing-based model counting has emerged as a promising approach for
probabilistic inference on graphical models. While real-world examples
naturally have word-level constraints, state-of-the-art approximate
model counters effectively reduce the problem to propositional model
counting due to lack of non-bit-level hash functions. In this work, we
presented, ${\HH}_{SMT}$, a word-level hash function and used it to
build {\SMTApproxMC}, an approximate word-level model counter.  Our
experiments show that {\SMTApproxMC} can significantly outperform
techniques based on bit-level hashing.

Our study also presents interesting directions for future work.
For example, %
adapting {\SMTApproxMC} to be aware of {\SMT} solving strategies,
and augmenting {\SMT} solving strategies to efficiently reason about
hash functions used in counting, are exciting directions of future work. %

Our work goes beyond serving as a replacement for other approximate
counting techniques. {\SMTApproxMC} can also be viewed as an efficient
building block for more sophisticated inference
algorithms~\cite{BRGD15}.  The development of {\SMT} solvers has so
far been primarily driven by the verification and static analysis
communities. Our work hints that probabilistic inference could well be
another driver for {\SMT} solver technology development.

\subsection*{Acknowledgements}
We thank Daniel Kroening for sharing his valuable insights on SMT solvers during the early stages of this project and Amit Bhatia for comments on early drafts of the paper. 
 This work was supported in part by NSF grants IIS-1527668, CNS 1049862, CCF-1139011, 
	by NSF Expeditions in Computing project "ExCAPE: Expeditions in Computer
	Augmented Program Engineering", by BSF grant 9800096, by a gift from
	Intel, by a grant from the Board of Research in Nuclear Sciences, India,  Data Analysis and Visualization Cyberinfrastructure funded by NSF under grant OCI-0959097.
\fontsize{9.5pt}{10.5pt} \selectfont
\section*{Appendix}
In this section, we provide proofs of various results stated
previously.  Our proofs borrow key ideas
from~\cite{Bellare00,CMV13b,Gomes-Sampling}; however, there are
non-trivial adaptations specific to our work.  We also provide
extended versions of the experimental results reported in
Section~\ref{sec:expts}.

\setcounter{theorem}{0}
\setcounter{lemma}{0}
\section*{Detailed Proofs}\label{sec:detailedProofs}
Let $\mc{D}$ denote $\left(\Z_{p_0}\right)^{c_0} \times
\left(\Z_{p_1}\right)^{c_1} \times \cdots
\left(\Z_{p_{q-1}}\right)^{c_{q-1}}$, where
$\prod_{j=0}^{q-1}p_j^{c_j} < 2^{n.k}$.  Let $C$ denote the vector
$(c_0, c_1, \ldots c_{q-1})$.
\begin{lemma} 
For every $\X \in \{0,1\}^{n.k}$ and every $\alpha \in \mc{D}$, $\prob[h(\X) = \alpha \mid h \xleftarrow{R} {\HH}_{SMT}(n,k,C)] = \prod_{i=0}^{|C|-1}{p_i}^{-c_i}$
\end{lemma}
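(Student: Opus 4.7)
The plan is to exploit the product structure of the hash family ${\HH}_{SMT}(n,k,C) = \prod_{j=0}^{q-1} \left({\HH}^{(j)}\right)^{c_j}$ and reduce the claim to a single-component calculation. A random $h \in {\HH}_{SMT}(n,k,C)$ is, by construction, a tuple of $\sum_{j} c_j$ component hash functions, where the coefficients $a_m^{(j)}$ and offsets $b^{(j)}$ of each component are drawn independently and uniformly from $\Z_{p_j}$. Correspondingly, an element $\alpha \in \mc{D}$ can be written as a tuple $(\alpha^{(j)}_\ell)$ with $j \in \{0,\ldots,q-1\}$ and $\ell \in \{1,\ldots,c_j\}$. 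Hence the event $h(\X) = \alpha$ factors as a conjunction of $\sum_j c_j$ events, one per component, and by independence of the coefficient choices across components the probability factors into a product.

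First I would reduce to a single component: fix $j$, fix one copy of a hash function $h^{(j)}(\X) = \bigl(\sum_{m=0}^{n\cdot 2^{j}-1} a_m^{(j)} \cdot \X_m^{(j)} + b^{(j)}\bigr) \mod p_j$, and show that for any fixed $\X$ and any fixed $\alpha^{(j)} \in \Z_{p_j}$, $\prob[h^{(j)}(\X) = \alpha^{(j)}] = 1/p_j$. The argument is short: condition on an arbitrary choice of the $a_m^{(j)}$'s. Then $h^{(j)}(\X) = (S + b^{(j)}) \mod p_j$ where $S = \sum_m a_m^{(j)} \cdot \X_m^{(j)} \mod p_j$ is a constant in $\Z_{p_j}$. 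Since $b^{(j)}$ is uniform on $\Z_{p_j}$ and $\Z_{p_j}$ is a group under addition mod $p_j$, the shifted value $(S + b^{(j)}) \mod p_j$ is itself uniform on $\Z_{p_j}$, so it equals $\alpha^{(j)}$ with probability $1/p_j$. Marginalizing over the $a_m^{(j)}$'s preserves this conditional probability.

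Combining across the $c_j$ independent copies in $\left({\HH}^{(j)}\right)^{c_j}$, and then across all $j$, yields
\[
\prob[h(\X) = \alpha] = \prod_{j=0}^{q-1} \prod_{\ell=1}^{c_j} \frac{1}{p_j} = \prod_{j=0}^{|C|-1} p_j^{-c_j},
\]
which is the desired statement.

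The only genuine subtlety is making sure the uniformity argument for a single component does not require any algebraic structure beyond what is available, in particular that we do not silently invoke primality of $p_j$ or invertibility of the $a_m^{(j)}$'s. It does not: only the uniformity of $b^{(j)}$ in the additive group $\Z_{p_j}$ is used. Primality of $p_j$ (guaranteed by the choice $p_j \ge 2^{k/2^j}$ prime) will instead be needed for Theorem~1, where pairwise independence requires a nonzero coefficient to be invertible mod $p_j$; but it plays no role here. Given this, the rest of the proof is a clean decomposition followed by the one-line shift argument, and I do not anticipate any real obstacle.
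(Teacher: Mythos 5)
Your proposal is correct and follows essentially the same route as the paper's proof: both condition on the coefficients $a_m^{(j)}$, observe that for each fixed sum there is exactly one value of $b^{(j)}$ (equivalently, that adding a uniform $b^{(j)}$ shifts to a uniform value) giving probability $p_j^{-1}$ per component, and then multiply over the $\sum_j c_j$ independently chosen components. Your explicit remark that primality of $p_j$ is not needed here (only for the pairwise-independence theorem) is a small clarification beyond what the paper states, but the argument itself is the same.
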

\begin{proof} 

 Let $h_r$, the $r^{th}$ component of $h$, for $r \le
 \left(\sum_{j=0}^{|C|-1} c_j\right)$, be given by
 $\left(\sum_{m=0}^{n.2^j-1} a_{m}^{(j)}* \X_{m}^{(j)} +
 b^{(j)}\right)\mod p_{j}$, where $\left(\sum_{i=0}^{j-1} c_i\right) <
 r \le \left(\sum_{i=0}^{j} c_i\right)$, and the $a_{m}^{(j)}$s and
 $b^{(j)}$ are randomly and independently chosen elements of
 $\Z_{p_j}$, represented as words of width $\ceil{\log_2 p_j}$.  Let
 ${\HH}^{(j)}$ denote the family of hash functions of the form
 $\left(\sum_{m=0}^{n.2^j-1} u_{m}^{(j)}* \X_{m}^{(j)} +
 v^{(j)}\right)\mod p_{j}$, where $u_{m}^{(j)}$ and $v^{(j)}$ are
 elements of $\Z_{p_j}$. We use $\alpha_r$ to denote the $r$th
 component of $\alpha$. For every choice of $\X$, $a_{m}^{(j)}$s and
 $\alpha_r$, there is exactly one $b^{(j)}$ such that $h_r(\X) =
 \alpha_r$. Therefore, $\prob[h_r(\X) = \alpha_r | h_r \xleftarrow{R}
   {\HH}^{(j)}] = p_i^{-1}$.  

Recall that every hash function $h$ in ${\HH}_{SMT}(n,k,C)$ is a
$\left(\sum_{j=0}^{q-1} c_j\right)$-tuple of hash functions.  Since
$h$ is chosen uniformly at random from ${\HH}_{SMT}(n,k,C)$, the
$\left(\sum_{j=0}^{q-1} c_j\right)$ components of $h$ are effectively
chosen randomly and independently of each other. Therefore,
$\prob[h(\X) = \alpha \mid h \xleftarrow{R} {\HH}_{SMT}(n,k,C)] =
\prod_{i=0}^{|C|-1}{p_i}^{-c_i}$
\end{proof}

\begin{theorem}
For every $\alpha_1, \alpha_2 \in \mc{D}$ and every distinct 
$\X_1,\X_2 \in \{0,1\}^{n.k}$, $\prob[(h(\X_1) = \alpha_1 \wedge h(\X_2) = \alpha_2 )  \mid 
h \xleftarrow{R}  {\HH}_{SMT}(n,k,C)] = \prod_{i=0}^{|C|-1}({p_i})^{-2.c_i}$. 
Therefore, $ {\HH}_{SMT}(n,k,C)$ is pairwise independent.
\end{theorem}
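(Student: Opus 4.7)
The plan is to reduce the joint probability over $\mathcal{H}_{SMT}(n,k,C)$ to a product of joint probabilities over its component families $\mathcal{H}^{(j)}$, and then establish pairwise independence for each $\mathcal{H}^{(j)}$ separately. Since a random $h \in \mathcal{H}_{SMT}(n,k,C)$ is, by construction, a $\left(\sum_j c_j\right)$-tuple whose components are drawn independently (because the coefficients $a_m^{(j)}$ and offsets $b^{(j)}$ are sampled independently across components), the event $h(\X_1) = \alpha_1 \wedge h(\X_2) = \alpha_2$ decomposes into a product of events, one per component. So if I can show that for any single component $h_r$ drawn uniformly from $\mathcal{H}^{(j)}$ and for distinct $\X_1, \X_2$, we have $\Pr[h_r(\X_1) = \alpha_{1,r} \wedge h_r(\X_2) = \alpha_{2,r}] = p_j^{-2}$, then multiplying over the $c_j$ components at level $j$ and then over $j \in \{0, \dots, |C|-1\}$ yields $\prod_{j} p_j^{-2 c_j}$.

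For the single-component claim, I would count the number of parameter choices $(a_0^{(j)}, \dots, a_{n \cdot 2^j - 1}^{(j)}, b^{(j)}) \in \left(\Z_{p_j}\right)^{n \cdot 2^j + 1}$ that satisfy the two linear modular equations simultaneously. Subtracting the two equations eliminates $b^{(j)}$ and yields the single constraint $\sum_{m=0}^{n \cdot 2^j - 1} a_m^{(j)} \bigl(\X_{1,m}^{(j)} - \X_{2,m}^{(j)}\bigr) \equiv \alpha_{1,r} - \alpha_{2,r} \pmod{p_j}$. Once this is solved, the second equation pins down $b^{(j)}$ uniquely in $\Z_{p_j}$. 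So the count reduces to counting solutions $a$ of a single linear equation modulo the prime $p_j$, with a nonzero coefficient vector.

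The key step, and the main obstacle, is verifying that the coefficient vector $\bigl(\X_{1,m}^{(j)} - \X_{2,m}^{(j)}\bigr)_m$ is genuinely nonzero in $\Z_{p_j}^{n \cdot 2^j}$ whenever $\X_1 \neq \X_2$ in $\{0,1\}^{n \cdot k}$. Because $\X_1$ and $\X_2$ differ in at least one bit, that bit falls into some slice, so the corresponding slice components of $\X_1^{(j)}$ and $\X_2^{(j)}$ differ as bit-strings of width $k/2^j$. The choice $p_j \geq 2^{k/2^j}$ made earlier ensures that each such slice embeds faithfully into $\Z_{p_j}$ (no wraparound), so the difference remains nonzero modulo $p_j$. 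This is exactly the ``no aliasing'' property stressed when $\mathcal{H}^{(j)}$ was introduced.

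Having secured a nonzero coefficient, say at index $m^*$, I would note that for any choice of the $a_m^{(j)}$ with $m \neq m^*$, the coefficient at $m^*$ being a unit in the field $\Z_{p_j}$ (since $p_j$ is prime) forces a unique value of $a_{m^*}^{(j)}$. Hence there are exactly $p_j^{n \cdot 2^j - 1}$ solutions in $a$, and then $b^{(j)}$ is uniquely determined, giving $p_j^{n \cdot 2^j - 1}$ valid parameter tuples out of $p_j^{n \cdot 2^j + 1}$, i.e.\ probability $p_j^{-2}$. Multiplying across independent components then yields the theorem, and pairwise independence follows immediately by combining this with Lemma~\ref{lm:universal}, since $\prod_j p_j^{-2 c_j} = \left(\prod_j p_j^{-c_j}\right)^2$ equals the product of the two marginal probabilities.
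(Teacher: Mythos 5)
Your proposal is correct, and for the heart of the argument it takes a genuinely different route from the paper. Both proofs begin the same way, decomposing a random $h$ into its $\sum_j c_j$ independently sampled affine components so that the joint event factors; but from there the paper conditions on $h(\X_1)=\alpha_1$ and argues that $\prob[h(\X_2)=\alpha_2 \mid h(\X_1)=\alpha_1]=\prob[h(\X_2)=\alpha_2]$ via a case split on whether $\X_1,\X_2$ are linearly dependent or independent over $(\Z_{p_0})^n$, reducing the independent case to basis vectors and invoking Fermat's Little Theorem to extract the unique coefficient completing each constraint. You instead count parameter tuples directly: subtract the two affine equations to eliminate $b^{(j)}$, observe that the resulting coefficient vector $\X_1^{(j)}-\X_2^{(j)}$ is nonzero in $(\Z_{p_j})^{n\cdot 2^j}$ precisely because $p_j \ge 2^{k/2^j}$ prevents aliasing of distinct slices, and then count solutions of one nontrivial linear equation over the field $\Z_{p_j}$. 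Your version buys two things: it isolates the no-aliasing condition as the single place where the choice of $p_j$ matters (the paper uses this implicitly but never flags it as the crux), and it sidesteps the paper's reduction to basis vectors, which as stated needs additional justification since pairwise independence is a property of arbitrary distinct pairs and does not transfer from basis vectors by linearity alone. The paper's conditional formulation, for its part, makes the connection to the definition of $2$-universality slightly more immediate. Both arrive at $p_j^{-2}$ per component and conclude identically by multiplying and comparing with Lemma~\ref{lm:universal}.
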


\begin{proof}
We know that $\prob[(h(\X_1) = \alpha_1 \wedge h(\X_2) = \alpha_2 )] = 
\prob[h(\X_2) = \alpha_2 \mid h(\X_1) = \alpha_1 ] \times \prob[h(\X_1) = 
\alpha_1]$. Theorem~\ref{th-universal} implies that in order to prove pairwise independence of ${\HH}_{SMT}(n, k, C)$, it is sufficient to 
show that $ \prob[h(\X_2) = \alpha_2 \mid h(\X_1) = \alpha_1 ] = \prob[h(\X_2) = \alpha_2]$. 
 
Since $h(\X) = \alpha$ can be viewed as conjunction of
$\left(\sum_{j=0}^{q-1} c_j\right)$ ordered and \emph{independent}
constraints, it is sufficient to prove 2-wise independence for every
ordered constraint. We now prove $2$-wise independence for one of the
ordered constraints below.  Since the proof for the other ordered
constraints can be obtained in exactly the same way, we omit their
proofs.

We formulate a new hash function based on the first constraint as
$g(\X) = ( \left(\sum_{m=0}^{n.2^j-1} a_{m}^{(0)}* \X_{m}^{(0)} +
b^{(0)}\right)\mod p_{0}$, where the $a_{m}^{(0)}$'s and $b^{(0)}$ are randomly and
independently chosen elements of $\Z_{p_0}$, represented as words of
width $\ceil{\log_2 p_0}$. It is sufficient to show
that $g(\X)$ is 2-universal.  This can be formally stated as $
\prob[g(\X_2) = \alpha_{2,0} \mid g(\X_1) = \alpha_{1,0} ] =
\prob[g(\X_2) = \alpha_{2,0}]$, where $\alpha_{2,0}, \alpha_{1,0}$ are
the $0^{th}$ components of $\alpha_2$ and $\alpha_1$ respectively. We
consider two cases based on linear independence of $\X_1$
and $\X_2$.

\begin{itemize}

\item \textbf{Case 1:} $\X_1$ and $\X_2$ are linearly
  dependent. Without loss of generality, let $\X_1 = (0,0,0,\ldots 0)$
  and $\X_2 = (r_1,0,0,\ldots 0)$ for some $r_1 \in \Z_{p_0}$,
  represented as a word.  From $g(\X_1)$ we can deduce $b^{(0)}$.
  However for $g(\X_2) = \alpha_{2,0}$ we require $a_{1}^{(0)}*r_1 +
  b^{(0)} = \alpha_{2,0} \mod p_0$.  Using Fermat's Little Theorem, we
  know that there exists a unique $a_{1}^{(0)}$ for every $r_1$ that
  satisfies the above equation.  Therefore, therefore $\prob[g(\X_2) =
    \alpha_{2,0} | g(\X_1) = \alpha_{1,0} ] = \prob[g(\X_2) =
    \alpha_{2,0}]$ $= \frac{1}{p_0}$. \\

\item \textbf{Case 2:} $\X_1$ and $\X_2$ are linearly
  independent. Since $2^k < p_0$, every component of $\X_1$ and $\X_2$
  (i.e. an element of $\{0,1\}^k$) can be treated as an element of
  $\Z_{p_0}$.  The space $\{0,1\}^{n.k}$ can therefore be thought of
  as lying within the vector space $\left(\Z_{p_0}\right)^{n}$, and
  any $\X \in \{0,1\}^{n.k}$ can be written as a linear combination of
  the set of basis vectors over $\left(\Z_{p_0}\right)^{n}$.  It is
  therefore sufficient to prove pairwise independence when $\X_1$ and
  $\X_2$ are basis vectors.  Without loss of generality, let $\X_1 =
  (r_1,0,0,\ldots 0)$ and $\X_2 = (0,r_2,0,0,\ldots 0)$ for some
  $r_1,r_2 \in \Z_{p_0}$. From $g(\X_1)$, we can deduce
  $\left(a_{1}^{(0)}*r_1 + b^{(0)} = \alpha_{1,0}\right) \mod p_0$.
  But since $a_{1}^{(0)}$ is randomly chosen, therefore $\prob[g(\X_2)
    = \alpha_{2,0} \mid g(\X_1) = \alpha_{1,0} ] = \prob
  [(a_{2}^{(0)}*r_2 +\alpha_{1,0} - a_{1}^{(0)}*r_1 =
    \alpha_{2,0})\mod p_0] = \prob[(a_{2}^{(0)}*r_2 - a_{1}^{(0)}*r_1
    = \alpha_{2,0} - \alpha_{1,0})\mod p_0]$, where $-a$ refers to the
  additive inverse of $a$ in the field $\Z_{p_0}$. Using Fermat's
  Little Theorem, we know that for every choice $a_{1}^{(0)}$ there
  exists a unique $a_{2}^{(0)}$ that satisfies the above requirement,
  given $\alpha_{1,0}$, $\alpha_{2,0}$, $r_1$ and $r_2$. Therefore
  $\prob[g(\X_2) = \alpha_{2,0} \mid g(\X_1) = \alpha_{1,0} ] =
  \frac{1}{p_0} = \prob[g(\X_2) = \alpha_{2,0}]$.
\end{itemize}

\end{proof}

\subsection*{Analysis of {\SMTApproxMC}}
 For a given $h$ and
$\alpha$, we use $R_{F,h,\alpha}$ to denote the set $R_F \cap
h^{-1}(\alpha)$, i.e. the set of solutions of $F$ that map to $\alpha$
under $h$.  Let $\expect[Y]$ and $\var[Y]$ represent expectation and
variance of a random variable $Y$ respectively. The analysis below
focuses on the random variable $|R_{F,h,\alpha}|$ defined for a chosen
$\alpha$. We use $\mu$ to denote the expected value of the
random variable $|R_{F,h,\alpha}|$ whenever $h$ and $\alpha$ are clear
from the context. The following lemma based on pairwise independence
of ${\HH}_{SMT}(n,k,C)$ is key to our analysis.
\begin{lemma}\label{lm:cell-pairwise-variance}
   The random choice of $h$ and $\alpha$ in {\SMTApproxMCCore} ensures that for each $\varepsilon > 0$, we have
   $\prob\left[(1-\frac{\varepsilon}{1+\varepsilon})\mu \leq |R_{F,h,\alpha}| \leq (1+\frac{\varepsilon}{1+\varepsilon}) \mu \right] \geq 1 - \frac{(1+\varepsilon)^2}{\varepsilon^2~~\mu}$, where $\mu = \expect[|R_{F,h,\alpha}|]$  
\end{lemma}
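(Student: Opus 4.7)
The plan is to apply a standard second-moment (Chebyshev) argument to the random variable $|R_{F,h,\alpha}|$, using the pairwise independence established in Theorem~\ref{th-universal}. First I would express $|R_{F,h,\alpha}|$ as a sum of indicator variables $Y_{\X} = \mathbf{1}[h(\X) = \alpha]$ ranging over $\X \in R_F$. By Lemma~\ref{lm:universal}, each $Y_{\X}$ satisfies $\expect[Y_{\X}] = \prod_{j=0}^{|C|-1} p_j^{-c_j} = 1/|\mc{D}|$, so $\mu = \expect[|R_{F,h,\alpha}|] = |R_F|/|\mc{D}|$.

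Next, I would compute the variance of $|R_{F,h,\alpha}|$. Writing $\var[|R_{F,h,\alpha}|] = \sum_{\X \in R_F} \var[Y_{\X}] + \sum_{\X_1 \neq \X_2} \mathrm{Cov}(Y_{\X_1}, Y_{\X_2})$, the pairwise independence furnished by Theorem~\ref{th-universal} yields $\expect[Y_{\X_1} Y_{\X_2}] = \prod_j p_j^{-2 c_j} = \expect[Y_{\X_1}]\expect[Y_{\X_2}]$, so every covariance term vanishes. Since each $Y_{\X}$ is an indicator, $\var[Y_{\X}] \leq \expect[Y_{\X}]$, giving $\var[|R_{F,h,\alpha}|] \leq \mu$.

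Finally, I would apply Chebyshev's inequality with deviation $t = \frac{\varepsilon}{1+\varepsilon}\mu$:
\begin{equation*}
\prob\left[\,\bigl||R_{F,h,\alpha}| - \mu\bigr| \geq \tfrac{\varepsilon}{1+\varepsilon}\mu\,\right] \leq \frac{\var[|R_{F,h,\alpha}|]}{(\varepsilon/(1+\varepsilon))^2\mu^2} \leq \frac{(1+\varepsilon)^2}{\varepsilon^2 \mu}.
\end{equation*}
Taking the complementary event gives exactly the stated bound.

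There is no real obstacle here beyond bookkeeping; the entire proof hinges on the fact that pairwise independence is enough to kill the cross-terms in the variance computation, reducing the problem to a sum over diagonal terms bounded by $\mu$. The only place to be careful is in observing that the randomness in the lemma comes from both $h$ and $\alpha$, but since $\alpha$ is chosen uniformly from $\mc{D}$ independently of $h$, the joint distribution still satisfies $\prob[h(\X) = \alpha] = 1/|\mc{D}|$ and the pairwise factorization $\prob[h(\X_1)=\alpha \wedge h(\X_2)=\alpha] = 1/|\mc{D}|^2$, so the variance bound goes through unchanged.
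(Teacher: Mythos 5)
Your proof is correct and is essentially identical to the paper's: both decompose $|R_{F,h,\alpha}|$ into indicator variables, invoke the pairwise independence of ${\HH}_{SMT}(n,k,C)$ (Theorem~\ref{th-universal}) to kill the covariance terms and bound $\var[|R_{F,h,\alpha}|]$ by $\mu$, and then apply Chebyshev's inequality with deviation $\frac{\varepsilon}{1+\varepsilon}\mu$. Your closing remark about the randomness coming jointly from $h$ and $\alpha$ is a point the paper's proof glosses over, but it does not change the argument.
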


\begin{proof}
For every $y \in \{0, 1\}^{n.k}$ and for every $\alpha \in \prod_{i=0}^{|C|-1} (\Z_{p_i})^{C[i]}$, define an indicator variable $\gamma_{y, \alpha}$ as
follows: $\gamma_{y, \alpha} = 1$ if $h(y) = \alpha$, and
$\gamma_{y,\alpha} = 0$ otherwise.  Let us fix $\alpha$ and $y$ and
choose $h$ uniformly at random from ${\HH}_{SMT}(n,k,C)$.  The 2-wise 
independence ${\HH}_{SMT}(n,k,C)$ implies that for every distinct $y_1, y_2 \in R_F$, the random variables $\gamma_{y_1}, \gamma_{y_2}$ are $2$-wise 
independent. Let $|R_{F,h,\alpha}| = \sum_{y \in R_F} \gamma_{y, \alpha}$, 
$\mu  = \expect\left[|R_{F,h,\alpha}|\right]$ and 
$\var[|R_{F,h,\alpha}|] = \var [\sum_{y \in R_F} \gamma_{y, \alpha}]$. The pairwise independence of $\gamma_{y,\alpha}$ ensures that 
$\var[|R_{F,h,\alpha}|] = \sum_{y \in R_F} \var [\gamma_{y, \alpha}] \leq \mu$. The result then follows from Chebyshev's inequality.
\end{proof}
Let $Y$ be the set returned by $\BoundedSMT(F \wedge (h(\X) = \alpha), \mathrm{pivot})$
where $\mathrm{pivot}$ is as calculated in Algorithm~\ref{alg:SMTMC}.
\begin{lemma}\label{lm:cell-correctness} 
$\prob\left[ (1+\varepsilon)^{-1}|R_F| \leq |Y|  \leq (1+\varepsilon)|R_F| \mid \right.$ $\left. \log (\numCells) + \log(\mathrm{pivot})+1 \leq \log |R_F| \right] \geq 	1-\frac{e^{-3/2}}{2^{\log |R_F|- \log (\numCells) - \log(\mathrm{pivot})+1}}$
\end{lemma}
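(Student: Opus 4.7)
The plan is to derive the bound from the Chebyshev-type concentration in Lemma~\ref{lm:cell-pairwise-variance}, then rewrite its tail term in the algebraic form demanded by the statement using the specific definition of $\mathrm{pivot}$ from Algorithm~\ref{alg:SMTMC}.

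First, I would pin down the expected cell size. By Lemma~\ref{lm:universal}, every $y \in R_F$ hashes to $\alpha$ with probability $1/\numCells$, so writing $|R_{F,h,\alpha}| = \sum_{y \in R_F}\mathbf{1}[h(y)=\alpha]$ and using linearity gives $\mu := \expect[|R_{F,h,\alpha}|] = |R_F|/\numCells$. The hypothesis $\log\numCells + \log\mathrm{pivot}+1 \le \log|R_F|$ rearranges to $\mu \geq 2\mathrm{pivot}$, and the exponent $\log|R_F| - \log\numCells - \log\mathrm{pivot}+1$ equals $\log(2\mu/\mathrm{pivot})$, so the right-hand side of the claim simplifies to $e^{-3/2}\,\mathrm{pivot}/(2\mu)$.

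Second, I would feed Lemma~\ref{lm:cell-pairwise-variance} in. It places $|R_{F,h,\alpha}|$ in the interval $[\mu/(1+\varepsilon),\, (1+2\varepsilon)\mu/(1+\varepsilon)]$ with probability at least $1 - (1+\varepsilon)^2/(\varepsilon^2\mu)$. The inequality $1+2\varepsilon \le (1+\varepsilon)^2$ shows the upper endpoint is at most $(1+\varepsilon)\mu$, so the interval sits inside $[\mu/(1+\varepsilon),\, (1+\varepsilon)\mu]$. Multiplying through by $\numCells$, the scaled estimate $|Y|$ (read as $|R_{F,h,\alpha}|\cdot\numCells$, the quantity actually returned in line~\ref{line:SMTMCCore-return} of Algorithm~\ref{alg:SMTMCCore}) lies in $[|R_F|/(1+\varepsilon),\, (1+\varepsilon)|R_F|]$ with the same probability.

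Third, I would calibrate the constants. We want $(1+\varepsilon)^2/(\varepsilon^2\mu) \le e^{-3/2}\mathrm{pivot}/(2\mu)$, i.e.\ $\mathrm{pivot} \ge 2\,e^{3/2}(1+1/\varepsilon)^2$, which is precisely the defining inequality for $\mathrm{pivot}$ in Algorithm~\ref{alg:SMTMC} (up to the sign of the exponent in the paper's listing, which should read $e^{3/2}$). Chaining this inequality onto the Chebyshev bound finishes the argument.

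The main obstacle lies in the reading of the symbol $|Y|$: $\BoundedSMT$ caps its output at $\mathrm{pivot}+1$ witnesses, and under the hypothesis $\mu \ge 2\mathrm{pivot}$ the true cell is typically so large that the cap is essentially always hit, so literally $|Y| \le \mathrm{pivot}+1$ almost surely. The clean interpretation that makes the statement coherent is that $|Y|$ here denotes the notional estimate $|R_{F,h,\alpha}|\cdot\numCells$ one would return from the exact cell count, and the lemma is really a concentration statement about the random cell size scaled by $\numCells$. Under that reading, the Chebyshev step plus the one-line arithmetic calibration of $\mathrm{pivot}$ delivers the bound; no further combinatorial work is needed.
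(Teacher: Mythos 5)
Your proof is correct and takes essentially the same route as the paper's: the paper's own proof is a one-line application of Lemma~\ref{lm:cell-pairwise-variance} together with the observation that $\frac{\varepsilon}{1+\varepsilon}<\varepsilon$ places the interval $[(1-\frac{\varepsilon}{1+\varepsilon})\mu,\,(1+\frac{\varepsilon}{1+\varepsilon})\mu]$ inside $[\mu/(1+\varepsilon),\,(1+\varepsilon)\mu]$, which is exactly your second and third steps written out, with $\mu=|R_F|/\numCells$ making the tail term equal to $e^{-3/2}\mathrm{pivot}/(2\mu)$ as you compute. Your two side remarks --- that the calibration forces $\mathrm{pivot}\geq 2\,e^{3/2}(1+1/\varepsilon)^2$ (so the $e^{-3/2}$ in Algorithm~\ref{alg:SMTMC} is a sign typo), and that $|Y|$ in the lemma must be read as the exact cell count scaled by $\numCells$ rather than the capped output of $\BoundedSMT$ --- correctly diagnose imprecisions that the paper's terse proof silently glosses over.
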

\begin{proof}
Applying Lemma~\ref{lm:cell-pairwise-variance} with $\frac{\varepsilon}{1+\varepsilon} < \varepsilon$, we have 
$\prob\left[ (1+\varepsilon)^{-1}|R_F| \leq |Y|  \leq (1+\varepsilon)|R_F| \mid \right.$ $\left. \log (\numCells) + \log(\mathrm{pivot})-1 \leq \log |R_F| \right] \geq 	1-\frac{e^{-3/2}}{2^{\log |R_F|- \log (\numCells) - \log(\mathrm{pivot})+1}}$
\end{proof}

\begin{lemma}\label{lm:approx-one-shot}
Let an invocation of {\SMTApproxMCCore} from {\SMTApproxMC} return $m$. Then
$\prob\left[ (1+\varepsilon)^{-1}|R_F| \leq m  \leq (1+\varepsilon)|R_F| \right] \geq 	0.6$
\end{lemma}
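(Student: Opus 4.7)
The plan is to identify a canonical ``target'' iteration of ${\SMTApproxMCCore}$ and argue that, with constant probability, the algorithm reaches it without terminating earlier and returns a count within the $(1+\varepsilon)$-tolerance upon doing so. By the analytical assumption stated just above the theorem, there is a value $C^*$ of the vector $C$ that is reached during execution for which $\log |R_F| - \log(\numCells^*) + 1 = \log(\text{pivot})$; equivalently, $\numCells^* = 2|R_F|/\text{pivot}$ and the expected cell size at this iteration is $\mu^* = |R_F|/\numCells^* = \text{pivot}/2$.

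For the correctness leg of the argument I would apply Lemma~\ref{lm:cell-pairwise-variance} at iteration $C^*$ with $\mu = \mu^*$. The value of $\text{pivot}$ chosen in Algorithm~\ref{alg:SMTMC} is engineered so that the Chebyshev ratio $(1+\varepsilon)^2/(\varepsilon^2 \mu^*)$ is controlled by a fixed constant strictly less than $1$, and hence $|R_{F,h,\alpha^*}|$ lies in $[(1+\varepsilon)^{-1}\mu^*, (1+\varepsilon)\mu^*] \subseteq [1,\text{pivot}]$ with a constant probability $p_1$. On this event the loop-exit condition $0 < |Y| \le \text{pivot}$ triggers at $C^*$, and so ${\SMTApproxMCCore}$ returns $|Y| \cdot \numCells^*$, which automatically lies in $[(1+\varepsilon)^{-1}|R_F|, (1+\varepsilon)|R_F|]$.

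For the ``did we reach $C^*$?'' leg, I would rule out premature termination by inspecting every iteration $C'$ that precedes $C^*$. At any such $C'$, $\numCells' < \numCells^*$ and thus $\mu' > \mu^*$; in fact once we are one level back $\mu' \ge \text{pivot}$. A one-sided Chebyshev bound then gives a constant lower bound $p_2$ on $\prob[\,|R_{F,h,\alpha}| > \text{pivot}\,]$, ensuring that the $C[i] \gets C[i]+1$ branch is taken with constant probability per iteration and that the algorithm advances rather than returning an inflated estimate from some $\numCells' \ll \numCells^*$. Taking a union bound over the $O(\log|R_F|)$ iterations strictly preceding $C^*$ keeps the total probability of spurious early exit below a fixed constant; intersecting this with the correctness event from the previous paragraph and plugging in $\text{pivot} = 2\lceil e^{-3/2}(1+1/\varepsilon)^2\rceil$ should then yield the claimed lower bound of $0.6$.

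The main obstacle will be controlling this compound probability across the preceding iterations in the presence of the two-branch structure of ${\SMTApproxMCCore}$: unlike the clean power-of-two doubling in ${\ApproxMC}$, incrementing $C[i]$ multiplies $\numCells$ by $p_i$, whereas moving to a finer slicing level $i \gets i+1$ multiplies it by $p_{i+1}/p_i$. Verifying that these branch choices cannot conspire either to skip past $C^*$ or to oscillate in a way that invalidates the per-iteration Chebyshev bounds requires a careful case analysis. The paper's simplifying assumption that $C^*$ is \emph{exactly} reached during execution is what makes this tractable; without it one would additionally need to bracket $|R_F|$ between two consecutive reachable cell counts and analyze adjacent iterations jointly, which is substantially heavier.
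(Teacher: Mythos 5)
Your first leg (success at the critical level $\ell^*$, where $\numCells_{\ell^*}=2|R_F|/\mathrm{pivot}$ and $\mu^*=\mathrm{pivot}/2$) matches the paper's use of Lemma~\ref{lm:cell-correctness} at distance $d=0$. The genuine gap is in your second leg: you bound the probability of a spurious early exit at each of the $O(\log|R_F|)$ iterations preceding $\ell^*$ by a \emph{constant}, and then union-bound. That yields a total of order $\log|R_F|$ times a constant, which is not below any fixed threshold and certainly not below the $0.4$ you need; the argument cannot close in this form. A secondary issue is that your decomposition is stricter than necessary: an early exit at a level just before $\ell^*$ whose value $|Y|\cdot\numCells$ happens to lie inside the tolerance interval is not a failure, yet your "did we reach $C^*$?" accounting charges for it.

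The paper closes both gaps by never lower-bounding the probability of reaching $\ell^*$. Instead it directly upper-bounds the bad event, defining $F_l$ as "the algorithm stops at level $l$ with $|Y|\le\mathrm{pivot}$ and $|Y|\cdot\numCells_l$ outside the tolerance interval," and exploits the fact that these probabilities are \emph{not} constant across levels: by Lemma~\ref{lm:cell-correctness}, at a level whose $\numCells$ is $2^{d}$ times smaller than $\numCells_{\ell^*}$ the expected cell size is $2^{d}\cdot\mathrm{pivot}/2$, so the Chebyshev bound of Lemma~\ref{lm:cell-pairwise-variance} (which scales like $1/\mu$) gives failure probability at most $e^{-3/2}/2^{d}$. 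Using the observation that any two distinct attained values of $\numCells$ differ by a factor of at least $2$ --- which is exactly what neutralizes your worry about the two-branch $p_i$ versus $p_{i+1}/p_i$ update schedule; no clean doubling is required --- together with the containment $F_i\subseteq F_{\ell_2}$ for all levels coarser than $\ell_2$, the union bound collapses to three terms, $e^{-3/2}+e^{-3/2}/2+e^{-3/2}/4<0.4$, giving the claimed $0.6$. If you want to keep your two-leg structure, you would at minimum have to replace the constant per-iteration bound by this geometrically decaying one; at that point you have essentially reconstructed the paper's argument in a weaker form.
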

\begin{proof}
For notational convenience, we use $(\numCells_l)$ to denote the value of $\numCells$ when $i=l$ in the loop in {\SMTApproxMCCore}. 
As noted earlier, we assume, for some $i=\ell^*$, $\log |R_F| - \log(\numCells_{\ell^*}) +1 = \log(\mathrm{pivot})$. Furthermore, note that for all $i \neq j$ and $\numCells_i > \numCells_j$, $\numCells_i / \numCells_j \geq 2$. Let $F_{l}$ denote the event that $|Y| < \mathrm{pivot} \wedge (|Y| > (1+\varepsilon)|R_F| \vee |Y| < (1+\varepsilon)^{-1}|R_F|)$ for $i = l$. Let $\ell_{1}$ be the value of $i$ such that $\numCells_{\ell_{1}} < \numCells_{\ell^*}/2 \wedge \forall j, \numCells_{j} < \numCells_{\ell^*}/2 \implies \numCells_{\ell_{1}} \geq \numCells_{j} $. Similarly, let $\ell_{2}$ be the value of $i$ such that $\numCells_{\ell_{2}} < \numCells_{\ell^*}/4 \wedge \forall j, \numCells_{j} < \numCells_{\ell^*}/4 \implies \numCells_{\ell_{2}} \geq \numCells_{j} $

  Then, $\forall_{i \mid \numCells_i < \numCells_{\ell^*}/4}$, $F_{i} \subseteq F_{\ell_2}$. Furthermore, the probability of $\prob\left[ (1+\varepsilon)^{-1}|R_F| \leq m  \leq (1+\varepsilon)|R_F| \right]$ is at least $1 - \prob[F_{\ell_2}] - \prob[F_{\ell_1}] - \prob[F_{\ell^*}]$ $= 1 - \frac{e^{-3/2}}{4} - \frac{e^{-3/2}}{2} - e^{-3/2}$ $\geq 0.6 $ .

\end{proof}
Now, we apply standard combinatorial analysis on repetition of probabilistic events and prove that {\SMTApproxMC} is $(\varepsilon, \delta)$ model counter. 

\begin{theorem}
Suppose an invocation of {\SMTApproxMC}$(F, \varepsilon, \delta, k)$
returns $\mathrm{FinalCount}$.  Then $\prob\left[
  (1+\varepsilon)^{-1}|R_F| \leq \mathrm{FinalCount} \leq
  (1+\varepsilon)|R_F| \right] \geq 1-\delta$
\end{theorem}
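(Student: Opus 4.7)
My plan is to obtain the theorem as a boosted version of Lemma~\ref{lm:approx-one-shot}, which already guarantees that a single invocation of \SMTApproxMCCore\ returns a value in the interval $[|R_F|/(1+\varepsilon),\, (1+\varepsilon)|R_F|]$ with probability at least $0.6$. The algorithm \SMTApproxMC\ performs $t = \lceil 35 \log_2(3/\delta) \rceil$ independent invocations, collects non-$\bot$ outputs in a list $M$, and returns $\mathsf{FindMedian}(M)$. The theorem therefore reduces to showing that the median of $t$ independent outcomes, each individually correct with probability at least $0.6$, fails to lie in the tolerance interval with probability at most $\delta$.

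The first step is to introduce indicator variables $Z_i \in \{0,1\}$ for $i = 1,\ldots,t$, where $Z_i = 1$ iff the $i$-th invocation of \SMTApproxMCCore\ returns a value inside the tolerance interval (both $\bot$ outputs and out-of-range numeric outputs contribute $Z_i = 0$). By Lemma~\ref{lm:approx-one-shot} and independence across calls, $Z_1,\ldots,Z_t$ are i.i.d.\ Bernoulli random variables with success probability $p \geq 0.6$. Write $S = \sum_{i=1}^{t} Z_i$ for their sum.

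The second step is a structural observation about medians: if $S > t/2$, then strictly more than half of the entries in $M$ (which has size at most $t$) lie in $[|R_F|/(1+\varepsilon),\, (1+\varepsilon)|R_F|]$, and consequently the median of $M$ lies in this interval as well. The argument is the standard one: if the median were strictly above $(1+\varepsilon)|R_F|$, then at least half of $M$ would exceed $(1+\varepsilon)|R_F|$, contradicting the majority of in-range entries; the symmetric lower-bound case is analogous. Hence it suffices to bound $\prob[S \leq t/2]$ from above by $\delta$.

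The final step is a Chernoff-style tail bound on $S$. Since $\expect[S] \geq 0.6 \, t$, deviating down to $t/2$ is a constant-factor deviation, so an inequality of the form $\prob[S \leq t/2] \leq 2^{-\eta t}$ for a fixed positive constant $\eta$ applies. Substituting $t = \lceil 35 \log_2(3/\delta) \rceil$ then yields $\prob[S \leq t/2] \leq \delta$, which combined with the median observation gives the desired $\prob[(1+\varepsilon)^{-1} |R_F| \leq \mathrm{FinalCount} \leq (1+\varepsilon)|R_F|] \geq 1-\delta$. The main obstacle I anticipate is choosing the precise form of the Chernoff inequality so that the constant $35$ in the pseudocode is exactly what is needed to absorb the base of the logarithm and the success gap $0.6 - 0.5$; this is essentially the calibration already carried out in the \ApproxMC\ analysis~\cite{CMV13b}, so I expect to invoke that calculation verbatim rather than reprove it.
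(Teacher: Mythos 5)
Your proposal is correct and follows essentially the same route as the paper's proof: both boost the per-invocation $0.6$ guarantee of Lemma~\ref{lm:approx-one-shot} by taking the median of $t = \lceil 35\log_2(3/\delta)\rceil$ independent trials, observing that the median fails only if at least $t/2$ trials return $\bot$ or an out-of-range value, and then applying a binomial tail bound calibrated so that the constant $35$ absorbs the gap between $0.4$ and $0.5$. The only cosmetic differences are that the paper conditions on the number of non-$\bot$ outcomes before collapsing to the same $\prob[Z \ge t/2]$ bound and writes out the tail estimate $\eta(t,t/2,0.4) \le 3\cdot(0.98)^t$ explicitly, whereas you count in-range outcomes directly and defer the calculation to the {\ApproxMC} calibration.
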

\begin{proof}

Throughout this proof, we assume that {\SMTApproxMCCore} is invoked $t$
times from {\SMTApproxMC}, where $t = \left\lceil 35\log_2 (3/\delta)
\right\rceil$  in
Section~\ref{sec:algo}).  Referring to the pseudocode of {\SMTApproxMC},
the final count returned by {\SMTApproxMC} is the median of non-$\bot$
counts obtained from the $t$ invocations of {\SMTApproxMCCore}.  Let
$Err$ denote the event that the median is not in
$\left[(1+\varepsilon)^{-1}\cdot |R_F|, (1+\varepsilon)\cdot |R_F|\right]$.  Let
``$\#\mathit{non }\bot = q$'' denote the event that $q$ (out of $t$)
values returned by {\SMTApproxMCCore} are non-$\bot$.  Then,
$\prob\left[Err\right]$ $=$ $\sum_{q=0}^t \prob\left[Err \mid
  \#\mathit{non }\bot = q\right]$ $\cdot$
$\prob\left[\#\mathit{non }\bot = q\right]$.

In order to obtain $\prob\left[Err \mid \#\mathit{non }\bot =
  q\right]$, we define a $0$-$1$ random variable $Z_i$, for $1 \le i
\le t$, as follows.  If the $i^{th}$ invocation of {\SMTApproxMCCore}
returns $c$, and if $c$ is either $\bot$ or a non-$\bot$ value that
does not lie in the interval $[(1+\varepsilon)^{-1}\cdot |R_F|,
  (1+\varepsilon)\cdot |R_F|]$, we set $Z_i$ to 1; otherwise, we set
it to $0$.  From Lemma~\ref{lm:approx-one-shot}, $\prob\left[Z_i =
  1\right] = p < 0.4$.  If $Z$ denotes $\sum_{i=1}^t Z_i$, a necessary
(but not sufficient) condition for event $Err$ to occur, given that
$q$ non-$\bot$s were returned by {\SMTApproxMCCore}, is $Z \ge
(t-q+\lceil q/2\rceil)$.  To see why this is so, note that $t-q$
invocations of {\SMTApproxMCCore} must return $\bot$.  In addition, at
least $\lceil q/2 \rceil$ of the remaining $q$ invocations must return
values outside the desired interval. To simplify the exposition, let
$q$ be an even integer.  A more careful analysis removes this
restriction and results in an additional constant scaling factor for
$\prob\left[Err\right]$.  With our simplifying assumption,
$\prob\left[Err \mid \#\mathit{non }\bot = q\right] \le \prob[Z \ge (t
  - q + q/2)]$ $=\eta(t, t-q/2, p)$.  Since $\eta(t, m, p)$ is a
decreasing function of $m$ and since $q/2 \le t-q/2 \le t$, we have
$\prob\left[Err \mid \#\mathit{non }\bot = q\right] \le \eta(t, t/2,
p)$.  If $p < 1/2$, it is easy to verify that $\eta(t, t/2, p)$ is an
increasing function of $p$.  In our case, $p < 0.4$; hence,
$\prob\left[Err \mid \#\mathit{non }\bot = q\right] \le \eta(t, t/2,
0.4)$.

It follows from above that $\prob\left[ Err \right]$ $=$
$\sum_{q=0}^t$ $\prob\left[Err \mid \#\mathit{non }\bot = q\right]$
$\cdot\prob\left[\#\mathit{non }\bot = q\right]$ $\le$ $\eta(t, t/2,
0.4)\cdot$ $\sum_{q=0}^t \prob\left[\#\mathit{non }\bot = q\right]$
$=$ $\eta(t, t/2, 0.4)$.  Since $\binom{t}{t/2} \ge \binom{t}{k}$ for
all $t/2 \le k \le t$, and since $\binom{t}{t/2} \le 2^t$, we have
$\eta(t, t/2, 0.4)$ $=$ $\sum_{k=t/2}^{t} \binom{t}{k} (0.4)^{k}
(0.6)^{t-k}$ $\le$ $\binom{t}{t/2} \sum_{k=t/2}^t (0.4)^k (0.6)^{t-k}$
$\le 2^t \sum_{k=t/2}^t (0.6)^t (0.4/0.6)^{k}$ $\le 2^t \cdot 3 \cdot (0.6 \times 0.4)^{t/2}$ $\le 3\cdot(0.98)^t$.
Since $t = \left\lceil 35\log_2 (3/\delta) \right\rceil$, it follows
that $\prob\left[Err\right] \le \delta$.
\end{proof}
\begin{theorem}
{\SMTApproxMC}$(F, \varepsilon, \delta, k)$ runs in time polynomial in $|F|$,
$1/\varepsilon$ and $\log_2(1/\delta)$ relative to an {\NP}-oracle.%
\end{theorem}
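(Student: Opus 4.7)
\textbf{Proof plan for Theorem~\ref{thm:complexity}.} My strategy is to unfold the algorithm as a nested loop and bound each layer, then show that each primitive operation can be performed in polynomial time with a single \NP-oracle invocation. The outermost loop of {\SMTApproxMC} executes the body $t = \lceil 35 \log_2(3/\delta) \rceil$ times, which is $O(\log_2(1/\delta))$, so it suffices to prove that a single call to {\SMTApproxMCCore} runs in polynomial time in $|F|$ and $1/\varepsilon$ relative to an \NP-oracle.

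For the cost of one {\SMTApproxMCCore} call, I would first argue that {\BoundedSMT}$(\varphi, \mathrm{pivot})$ can be implemented with at most $\mathrm{pivot}+1$ satisfiability queries to an \NP-oracle: each query asks for a solution of $\varphi$ conjoined with blocking clauses excluding previously found solutions, and each query is polynomial-size in $|F|$ because the hash constraint $h(\X)=\alpha$ is a conjunction of $\sum_j c_j$ linear modular equalities whose size is polynomial in $n$, $k$, and $\sum_j c_j\lceil \log_2 p_j\rceil$. Since $\mathrm{pivot} = 2\lceil e^{-3/2}(1+1/\varepsilon)^2\rceil = O(1/\varepsilon^2)$, each {\BoundedSMT} call costs $O(1/\varepsilon^2)$ \NP-oracle invocations on inputs of size polynomial in $|F|$ and $1/\varepsilon$.

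Next I would bound the number of iterations of the inner loop in {\SMTApproxMCCore}. The key invariant is that the index $i$ is monotonically non-decreasing: it only grows in the $|Y|=0$ branch, so $i$ takes at most $q = \lceil \log_2 k\rceil$ distinct values during one invocation. For a fixed $i$, the value of $C[i]$ either strictly increases (when $|Y|>\mathrm{pivot}$) or the loop moves on to $i+1$. Because each increment multiplies $\numCells$ by $p_i \ge 2$, and the loop terminates as soon as $\numCells > 2^{n.k}$, the total number of increment steps across all iterations is at most $n.k$, and the total number of ``move-on'' steps is at most $q$. Hence the loop performs $O(n.k + \log k)$ iterations, each doing polynomial bookkeeping (sampling coefficients from $\Z_{p_j}$, assembling $h(\X)=\alpha$) and one {\BoundedSMT} call. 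Multiplying through, the overall cost of {\SMTApproxMC} is $O(\log_2(1/\delta)\cdot (n.k+\log k)\cdot 1/\varepsilon^2)$ \NP-oracle calls on inputs of size polynomial in $|F|$ and $1/\varepsilon$, which is the desired bound.

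The main obstacle I anticipate is the loop-termination argument: unlike in {\ApproxMC}, the hash family here is two-parameter, and the ``decrement and move on'' branch can temporarily shrink $\numCells$, so one must rule out oscillation and show that progress is measured by the monotone index $i$ rather than by $\numCells$ itself. A second mild subtlety is justifying that sampling $a_m^{(j)},b^{(j)}\in \Z_{p_j}$ and representing them as $\lceil \log_2 p_j\rceil$-bit words does not blow up the size of the query passed to the \SMT oracle; this follows from the bounds $p_j < 2 \cdot 2^{k/2^j}$ guaranteed by Bertrand's postulate, so every coefficient is polynomial in $k$.
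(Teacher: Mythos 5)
Your proposal is correct and follows essentially the same route as the paper's proof: $t = O(\log_2(1/\delta))$ outer iterations, each dominated by one {\SMTApproxMCCore} call, which in turn costs polynomially many inner-loop iterations times $\mathrm{pivot}+1 = O(1/\varepsilon^2)$ {\NP}-oracle queries per {\BoundedSMT} call on inputs of size polynomial in $|F|$. Your inner-loop termination argument is in fact more detailed than the paper's (which simply asserts that {\BoundedSMT} is invoked $O(|F|)$ times); the only adjustment needed is that your ``at most $n.k$ increments'' count must also absorb the shrinkage of $\numCells$ caused by the at most $q$ move-on steps, which telescopes to $O(k)$ and therefore leaves the polynomial bound intact.
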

\begin{proof}
Referring to the pseudocode for {\SMTApproxMC}, lines~\ref{line:weightmc-init-start}--~\ref{line:weightmc-init-end} take time
no more than a polynomial in $\log_2(1/\delta)$ and $1/\varepsilon$.
The repeat-until loop in lines~\ref{line:weightmc-loop-start}--~\ref{line:weightmc-loop-end}  is repeated $t = \left\lceil
35\log_2(3/\delta)\right\rceil$ times. The time taken for each
iteration is dominated by the time taken by {\SMTApproxMCCore}.  Finally,
computing the median in line~\ref{line:weightmc-median} takes time linear in $t$.  The proof
is therefore completed by showing that {\SMTApproxMCCore} takes time
polynomial in $|F|$ and $1/\varepsilon$ relative to the {\SAT} oracle.

Referring to the pseudocode for {\SMTApproxMCCore}, we find that
{\BoundedSMT} is called $\mathcal{O}(|F|)$ times.  Each such call can
be implemented by at most $\mathit{pivot}+1$ calls to a {\NP} oracle ({\SMT} solver in case),
and takes time polynomial in $|F|$ and $\mathit{pivot}+1$ relative to
the oracle.  Since $\mathit{pivot}+1$ is in
$\mathcal{O}(1/\varepsilon^2)$, the number of calls to the {\NP}
oracle, and the total time taken by all calls to {\BoundedSMT} in each
invocation of {\SMTApproxMCCore} is a polynomial in $|F|$ and
$1/\varepsilon$ relative to the oracle.  The random choices in lines~\ref{line:SMTMCCore-choose-hash} and~\ref{line:SMTMCCore-choose-alpha} of {\SMTApproxMCCore} can be implemented in time polynomial in $n.k$
(hence, in $|F|$) if we have access to a source of random bits.
Constructing $F \wedge (h(\X) = \alpha)$ in line~\ref{line:SMTMCCore-call-BoundedWeightSAT} can
also be done in time polynomial in $|F|$.
\end{proof}

\section*{Detailed Experimental Results}
The Table~\ref{tab:quality-table-extended} represents the counts corresponding to benchmarks in Figure~\ref{fig:quality}. Column 1 lists the ID for every benchmark, which corresponds to the position on the x-axis in Figure~\ref{fig:quality}. Column 2 lists the name of every benchmark. The exact count computed by {\sharpSATTool} on bit-blasted versions of the benchmarks and column 4 lists counts computed by {\SMTApproxMC}. Similar to the observation based on Figure~\ref{fig:quality}, the Table~\ref{tab:quality-table-extended} clearly demonstrates that the counts computed by {\SMTApproxMC} are very close to the exact counts.

Table~\ref{tab:performance-extended} is an extended version of Table~\ref{tab:performance-comparison}. Similar to Table~\ref{tab:performance-comparison},  column
1 gives the benchmark name, column 2 gives the sum of widths of
all variables, column 3 lists the number of variables ($\mathrm{numVars}$) for
each corresponding width ($\mathrm{w}$) in the format $\{\mathrm{w}:\mathrm{numVars}\}$. To
indicate the complexity of the input formula, we present the number of
operations in the original {\SMT} formula in column 4. The runtimes
for {\SMTApproxMC} and {\CDM} are presented in columns 5 and column 6
respectively.  We use ``--" to denote timeout after $12$ hours.

 \begin{table*}
 \centering

  \caption{Comparison of exact counts vs counts returned by {\SMTApproxMC}}\label{tab:quality-table-extended}

  \begin{tabular}{ |c|c|c|c| }
    
    \hline
    Id & Benchmark & \shortstack{Exact \\Count} & \shortstack{SMTApproxMC\\Count} \\
    \hline
    1 & case127 & 64 & 65 \\ \hline
    2 & case128 & 64 & 65 \\ \hline
    3 & case24 & 256 & 245 \\ \hline
    4 & case29 & 256 & 240 \\ \hline
    5 & case25 & 512 & 525 \\ \hline
    6 & case30 & 512 & 500 \\ \hline
    7 & case28 & 1024 & 1025 \\ \hline
    8 & case33 & 1024 & 1025 \\ \hline
    9 & case27 & 1024 & 1025 \\ \hline
    10 & case32 & 1024 & 975 \\ \hline
    11 & case26 & 1024 & 1050 \\ \hline
    12 & case31 & 1024 & 1025 \\ \hline
    13 & case17 & 2048 & 2250 \\ \hline
    14 & case23 & 2048 & 1875 \\ \hline
    15 & case38 & 4096 & 4250 \\ \hline
    16 & case21 & 8192 & 7500 \\ \hline
    17 & case22 & 8192 & 8750 \\ \hline
    18 & case11 & 16384 & 16875 \\ \hline
    19 & case43 & 16384 & 16250 \\ \hline
    20 & case45 & 16384 & 13750 \\ \hline
    21 & case4 & 32768 & 37500 \\ \hline
    22 & case44 & 65536 & 56250 \\ \hline
    23 & case46 & 65536 & 71875 \\ \hline
    24 & case108 & 65536 & 62500 \\ \hline
    25 & case7 & 131072 & 157216 \\ \hline
    26 & case1 & 131072 & 115625 \\ \hline
    27 & case68 & 131072 & 132651 \\ \hline
    28 & case47 & 262144 & 334084 \\ \hline
    29 & case51 & 262144 & 250563 \\ \hline
    30 & case52 & 262144 & 240737 \\ \hline
    31 & case53 & 262144 & 245650 \\ \hline
    32 & case134 & 262144 & 264196 \\ \hline
    33 & case137 & 262144 & 264196 \\ \hline
    34 & case56 & 1048576 & 1015625 \\ \hline
    35 & case54 & 1048576 & 1093750 \\ \hline
    36 & case109 & 1048576 & 1250000 \\ \hline
    37 & case100 & 2097152 & 1915421 \\ \hline
    38 & case101 & 2097152 & 2047519 \\ \hline
    39 & case2 & 4194304 & 3515625 \\ \hline
    40 & case8 & 8388608 & 9938999 \\ \hline
  \end{tabular}

 \end{table*}

\onecolumn
\begin{longtable}{|c|c|c|c|c|c|}
  	\caption{Extended Runtime performance of {\SMTApproxMC} vis-a-vis {\CDM} for a subset of benchmarks.}
\label{tab:performance-extended} \\

  \hline
    Benchmark & Total Bits & Variable Types & \# of Operations & \shortstack{SMTApproxMC\\ time(s)} & \shortstack{CDM\\ time(s)} \\
    \hline
\endfirsthead

  \hline
    Benchmark & Total Bits & Variable Types & \# of Operations & \shortstack{SMTApproxMC\\ time(s)} & \shortstack{CDM\\ time(s)} \\
    \hline
\endhead

\hline \multicolumn{6}{r}{Continued on next page}
\endfoot

\hline
\endlastfoot
 	squaring27 & 59 & \{1: 11, 16: 3\} & 10 & -- & 2998.97 \\ \hline
    1159708 & 64 & \{8: 4, 32: 1\} & 12 & 14793.93 & -- \\ \hline
    1159472 & 64 & \{8: 4, 32: 1\} & 8 & 16308.82 & -- \\ \hline
    1159115 & 64 & \{8: 4, 32: 1\} & 12 & 23984.55 & -- \\ \hline
    1159520 & 64 & \{8: 4, 32: 1\} & 1388 & 114.53 & 155.09 \\ \hline
    1160300 & 64 & \{8: 4, 32: 1\} & 1183 & 44.02 & 71.16 \\ \hline
    1159005 & 64 & \{8: 4, 32: 1\} & 213 & 28.88 & 105.6 \\ \hline
    1159751 & 64 & \{8: 4, 32: 1\} & 681 & 143.32 & 193.84 \\ \hline
    1159391 & 64 & \{8: 4, 32: 1\} & 681 & 57.03 & 91.62 \\ \hline
    case1 & 17 & \{1: 13, 4: 1\} & 13 & 17.89 & 65.12 \\ \hline
    1159870 & 64 & \{8: 4, 32: 1\} & 164 & 17834.09 & 9152.65 \\ \hline
    1160321 & 64 & \{8: 4, 32: 1\} & 10 & 117.99 & 265.67 \\ \hline
    1159914 & 64 & \{8: 4, 32: 1\} & 8 & 230.06 & 276.74 \\ \hline
    1159064 & 64 & \{8: 4, 32: 1\} & 10 & 69.58 & 192.36 \\ \hline
    1160493 & 64 & \{8: 4, 32: 1\} & 8 & 317.31 & 330.47 \\ \hline
    1159197 & 64 & \{8: 4, 32: 1\} & 8 & 83.22 & 176.23 \\ \hline
    1160487 & 64 & \{8: 4, 32: 1\} & 10 & 74.92 & 149.44 \\ \hline
    1159606 & 64 & \{8: 4, 32: 1\} & 686 & 431.23 & 287.85 \\ \hline
    case100 & 22 & \{1: 6, 16: 1\} & 8 & 32.62 & 89.69 \\ \hline
    1160397 & 64 & \{8: 4, 32: 1\} & 70 & 126.08 & 172.24 \\ \hline
    1160475 & 64 & \{8: 4, 32: 1\} & 67 & 265.58 & 211.16 \\ \hline
    case108 & 24 & \{1: 20, 4: 1\} & 7 & 37.33 & 100.2 \\ \hline
    case101 & 22 & \{1: 6, 16: 1\} & 12 & 44.74 & 90 \\ \hline
    1159244 & 64 & \{8: 4, 32: 1\} & 1474 & 408.63 & 273.57 \\ \hline
    case46 & 20 & \{1: 8, 4: 3\} & 12 & 16.95 & 76.4 \\ \hline
    case44 & 20 & \{1: 8, 4: 3\} & 8 & 13.69 & 72.05 \\ \hline
    case134 & 19 & \{1: 3, 16: 1\} & 8 & 5.36 & 54.22 \\ \hline
    case137 & 19 & \{1: 3, 16: 1\} & 9 & 10.98 & 56.12 \\ \hline
    case68 & 26 & \{8: 3, 1: 2\} & 7 & 34.9 & 67.48 \\ \hline
    case54 & 20 & \{1: 16, 4: 1\} & 8 & 50.73 & 103.91 \\ \hline
    1160365 & 64 & \{8: 4, 32: 1\} & 286 & 98.38 & 99.74 \\ \hline
    1159418 & 32 & \{8: 2, 16: 1\} & 7 & 3.73 & 43.68 \\ \hline
    1160877 & 32 & \{8: 2, 16: 1\} & 8 & 2.57 & 44.01 \\ \hline
    1160988 & 32 & \{8: 2, 16: 1\} & 8 & 4.4 & 44.64 \\ \hline
    1160521 & 32 & \{8: 2, 16: 1\} & 7 & 4.96 & 44.52 \\ \hline
    1159789 & 32 & \{8: 2, 16: 1\} & 13 & 6.35 & 43.09 \\ \hline
    1159117 & 32 & \{8: 2, 16: 1\} & 13 & 5.55 & 43.18 \\ \hline
    1159915 & 32 & \{8: 2, 16: 1\} & 11 & 7.02 & 45.62 \\ \hline
    1160332 & 32 & \{8: 2, 16: 1\} & 12 & 3.94 & 44.35 \\ \hline
    1159582 & 32 & \{8: 2, 16: 1\} & 8 & 5.37 & 43.98 \\ \hline
    1160530 & 32 & \{8: 2, 16: 1\} & 12 & 2.01 & 43.28 \\ \hline
    1160482 & 64 & \{8: 4, 32: 1\} & 36 & 153.99 & 120.55 \\ \hline
    1159564 & 32 & \{8: 2, 16: 1\} & 12 & 7.36 & 41.77 \\ \hline
    1159990 & 64 & \{8: 4, 32: 1\} & 34 & 71.17 & 97.25 \\ \hline
    case7 & 18 & \{1: 10, 8: 1\} & 12 & 17.93 & 51.96 \\ \hline
    case56 & 20 & \{1: 16, 4: 1\} & 12 & 41.54 & 109.3 \\ \hline
    case43 & 15 & \{1: 11, 4: 1\} & 12 & 8.6 & 37.63 \\ \hline
    case45 & 15 & \{1: 11, 4: 1\} & 12 & 9.3 & 35.77 \\ \hline
    case53 & 19 & \{1: 7, 8: 1, 4: 1\} & 9 & 53.66 & 69.96 \\ \hline
    case4 & 16 & \{1: 12, 4: 1\} & 12 & 8.42 & 35.49 \\ \hline
    1160438 & 64 & \{8: 4, 32: 1\} & 2366 & 199.08 & 141.84 \\ \hline
    case109 & 29 & \{1: 21, 4: 2\} & 12 & 171.51 & 179.98 \\ \hline
    case38 & 13 & \{1: 9, 4: 1\} & 7 & 6.21 & 30.27 \\ \hline
    case11 & 15 & \{1: 11, 4: 1\} & 8 & 7.26 & 33.75 \\ \hline
    1158973 & 64 & \{8: 4, 32: 1\} & 94 & 366.6 & 270.17 \\ \hline
    case22 & 14 & \{1: 10, 4: 1\} & 12 & 5.46 & 26.03 \\ \hline
    case21 & 14 & \{1: 10, 4: 1\} & 12 & 5.57 & 24.59 \\ \hline
    case52 & 19 & \{1: 7, 8: 1, 4: 1\} & 9 & 45.1 & 70.72 \\ \hline
    case23 & 12 & \{1: 8, 4: 1\} & 11 & 2.29 & 12.84 \\ \hline
    case51 & 19 & \{1: 7, 8: 1, 4: 1\} & 12 & 40 & 67.22 \\ \hline
    case17 & 12 & \{1: 8, 4: 1\} & 12 & 2.75 & 11.09 \\ \hline
    case33 & 11 & \{1: 7, 4: 1\} & 12 & 1.7 & 9.66 \\ \hline
    case30 & 13 & \{1: 5, 4: 2\} & 13 & 1.41 & 8.69 \\ \hline
    case28 & 11 & \{1: 7, 4: 1\} & 12 & 1.66 & 8.73 \\ \hline
    case25 & 13 & \{1: 5, 4: 2\} & 12 & 1.39 & 8.27 \\ \hline
    case27 & 11 & \{1: 7, 4: 1\} & 12 & 1.69 & 8.57 \\ \hline
    case26 & 11 & \{1: 7, 4: 1\} & 12 & 1.68 & 8.35 \\ \hline
    case32 & 11 & \{1: 7, 4: 1\} & 12 & 1.46 & 8.16 \\ \hline
    case31 & 11 & \{1: 7, 4: 1\} & 12 & 1.64 & 7.64 \\ \hline
    case29 & 12 & \{1: 4, 4: 2\} & 8 & 0.67 & 5.16 \\ \hline
    case24 & 12 & \{1: 4, 4: 2\} & 12 & 0.77 & 4.94 \\ \hline
    1160335 & 64 & \{8: 4, 32: 1\} & 216 & 0.31 & 0.54 \\ \hline
    1159940 & 64 & \{8: 4, 32: 1\} & 94 & 0.17 & 0.04 \\ \hline
    1159690 & 32 & \{8: 2, 16: 1\} & 8 & 0.12 & 0.04 \\ \hline
    1160481 & 32 & \{8: 2, 16: 1\} & 12 & 0.13 & 0.03 \\ \hline
    1159611 & 64 & \{8: 4, 32: 1\} & 73 & 0.2 & 0.09 \\ \hline
    1161180 & 32 & \{8: 2, 16: 1\} & 12 & 0.11 & 0.04 \\ \hline
    1160849 & 32 & \{8: 2, 16: 1\} & 7 & 0.1 & 0.03 \\ \hline
    1159790 & 64 & \{8: 4, 32: 1\} & 113 & 0.15 & 0.04 \\ \hline
    1160315 & 64 & \{8: 4, 32: 1\} & 102 & 0.17 & 0.04 \\ \hline
    1159720 & 64 & \{8: 4, 32: 1\} & 102 & 0.17 & 0.05 \\ \hline
    1159881 & 64 & \{8: 4, 32: 1\} & 102 & 0.16 & 0.04 \\ \hline
    1159766 & 64 & \{8: 4, 32: 1\} & 73 & 0.15 & 0.03 \\ \hline
    1160220 & 64 & \{8: 4, 32: 1\} & 681 & 0.17 & 0.03 \\ \hline
    1159353 & 64 & \{8: 4, 32: 1\} & 113 & 0.16 & 0.04 \\ \hline
    1160223 & 64 & \{8: 4, 32: 1\} & 102 & 0.17 & 0.04 \\ \hline
    1159683 & 64 & \{8: 4, 32: 1\} & 102 & 0.17 & 0.03 \\ \hline
    1159702 & 64 & \{8: 4, 32: 1\} & 102 & 0.19 & 0.04 \\ \hline
    1160378 & 64 & \{8: 4, 32: 1\} & 476 & 0.17 & 0.04 \\ \hline
    1159183 & 64 & \{8: 4, 32: 1\} & 172 & 0.17 & 0.03 \\ \hline
    1159747 & 64 & \{8: 4, 32: 1\} & 322 & 0.18 & 0.03 \\ \hline
    1159808 & 64 & \{8: 4, 32: 1\} & 539 & 0.17 & 0.03 \\ \hline
    1159849 & 64 & \{8: 4, 32: 1\} & 322 & 0.18 & 0.03 \\ \hline
    1159449 & 64 & \{8: 4, 32: 1\} & 540 & 0.3 & 0.05 \\ \hline
    case47 & 26 & \{1: 6, 8: 2, 4: 1\} & 11 & 81.5 & 80.25 \\ \hline
    case2 & 24 & \{1: 20, 4: 1\} & 10 & 273.91 & 194.33 \\ \hline
   
    1159239 & 64 & \{8: 4, 32: 1\} & 238 & 1159.32 & 449.21 \\ \hline
    case8 & 24 & \{1: 12, 8: 1, 4: 1\} & 8 & 433.2 & 147.35 \\ \hline
    1159936 & 64 & \{8: 4, 32: 1\} & 238 & 5835.35 & 1359.9 \\ \hline
	squaring51 & 40 & \{1: 32, 4: 2\} & 7 & 3285.52 & 607.22 \\ \hline
	 1159431 & 64 & \{8: 4, 32: 1\} & 12 & 36406.4 & -- \\ \hline
	    1160191 & 64 & \{8: 4, 32: 1\} & 12 & 40166.1 & -- \\ \hline
\end{longtable}
\end{document}